\newif\ifisarxiv
\newenvironment{boxedalgorithmic}
  {\mdframed[topline=true,bottomline=true,rightline=false,leftline=false,
      linecolor=black,linewidth=0.6pt,backgroundcolor=white,roundcorner=0.6pt]
   \begin{algorithmic}}
  {\end{algorithmic}\endmdframed}
\newlength{\mywidth}
\newlength{\mywidth}
\theoremstyle{plain}
\newtheorem{theorem}{Theorem}[section]
\newtheorem{lemma}[theorem]{Lemma}
\theoremstyle{definition}
\newtheorem{definition}[theorem]{Definition}
\theoremstyle{remark}
\newcommand{\sys}{FlexGen\xspace}
\newcommand{\showcomments}{yes}
\newcommand{\x}{\mathbf{x}}
\newcommand{\wq}{\mathbf{w}_Q}
\newcommand{\wv}{\mathbf{w}_V}
\newcommand{\wk}{\mathbf{w}_K}
\newcommand{\wo}{\mathbf{w}_O}
\newcommand{\wa}{\mathbf{w}_1}
\newcommand{\wb}{\mathbf{w}_2}
\newcommand{\xq}{\mathbf{x}_Q}
\newcommand{\xv}{\mathbf{x}_V}
\newcommand{\xk}{\mathbf{x}_K}
\newcommand{\xo}{\mathbf{x}_{\textsf{Out}}}
\newcommand{\tx}{\mathbf{t}}
\newcommand{\tq}{\mathbf{t}_Q}
\newcommand{\txo}{\mathbf{t}_{\textsf{Out}}}
\newcommand\todo[1]{\ifthenelse{\equal{\showcomments}{yes}}{{\color{red} TODO: #1}}{\ignorespaces}}
\newcommand\dan[1]{\ifthenelse{\equal{\showcomments}{yes}}{{\color{red} DF: #1}}{\ignorespaces}}
\newif\ifcomments
    \newcommand{\ying}[1]{{\color{green}{\bf\sf [Ying: #1]}}}
    \newcommand{\lianmin}[1]{{\color{red}{\bf\sf [Lianmin: #1]}}}
    \newcommand{\ce}[1]{{\color{cyan}{\bf\sf [Ce: #1]}}}
    \newcommand{\pl}[1]{{\color{cyan}{\bf\sf [Percy: #1]}}}
    \newcommand{\chris}[1]{{\color{blue}{\bf\sf [Chris: #1]}}}
    \newcommand{\binhang}[1]{{\color{blue}{\bf\sf [Binhang: #1]}}}
    \newcommand{\beidi}[1]{{\color{cyan}{\bf\sf [Beidi: #1]}}}
    \newcommand{\clark}[1]{{\color{red}{\bf\sf [Clark: #1]}}}
    \newcommand{\ion}[1]{{\color{red}{\bf\sf [Ion: #1]}}}
    \newcommand{\fixme}[1]{{\color{orange}{#1}}}
    \newcommand{\ying}[1]{}
    \newcommand{\lianmin}[1]{}
    \newcommand{\ce}[1]{}
    \newcommand{\pl}[1]{{}
    \newcommand{\chris}[1]{}
    \newcommand{\binhang}[1]{}
    \newcommand{\beidi}[1]{}
    \newcommand{\clark}[1]{}
    \newcommand{\ion}[1]{}
    \newcommand{\fixme}[1]{#1}
\fi

\renewcommand{\algorithmicrequire}{\textbf{Input:}}
\renewcommand{\algorithmicensure}{\textbf{Output:}}

\icmltitlerunning{
~\hfill
FlexGen: High-Throughput Generative Inference of Large Language Models with a Single GPU~
\hfill
}

\begin{document}

\ifisarxiv
\icmltitle{
FlexGen: High-Throughput Generative Inference of Large Language Models \\
\hfill with a Single GPU \hfill
}
\icmlsetsymbol{equal}{*}

\begin{icmlauthorlist}
\icmlauthor{Ying Sheng}{stanford}
\icmlauthor{Lianmin Zheng}{berkeley}
\icmlauthor{Binhang Yuan}{eth}
\icmlauthor{Zhuohan Li}{berkeley}
\icmlauthor{Max Ryabinin}{yandex,hse}
\icmlauthor{Daniel Y. Fu}{stanford}
\icmlauthor{Zhiqiang Xie}{stanford}
\\
\icmlauthor{Beidi Chen}{meta,cmu}
\icmlauthor{Clark Barrett}{stanford}
\icmlauthor{Joseph E. Gonzalez}{berkeley}
\icmlauthor{Percy Liang}{stanford}
\icmlauthor{Christopher R\'e}{stanford}
\icmlauthor{Ion Stoica}{berkeley}
\icmlauthor{Ce Zhang}{eth}
\end{icmlauthorlist}

\icmlaffiliation{stanford}{Stanford University}
\icmlaffiliation{berkeley}{UC Berkeley}
\icmlaffiliation{eth}{ETH Zurich}
\icmlaffiliation{meta}{Meta}
\icmlaffiliation{yandex}{Yandex}
\icmlaffiliation{cmu}{Carnegie Mellon University}
\icmlaffiliation{hse}{HSE University}

\icmlcorrespondingauthor{Ying Sheng}{ying1123@stanford.edu}
\icmlcorrespondingauthor{Lianmin Zheng}{lianminzheng@gmail.com}
\icmlcorrespondingauthor{Binhang Yuan}{biyuan@inf.ethz.ch}
\icmlcorrespondingauthor{Zhuohan Li}{zhuohan@cs.berkeley.edu}
\icmlcorrespondingauthor{Max Ryabinin}{mryabinin0@gmail.com}
\icmlcorrespondingauthor{Daniel Y. Fu}{danfu@cs.stanford.edu}
\icmlcorrespondingauthor{Zhiqiang Xie}{xiezhq@stanford.edu}
\icmlcorrespondingauthor{Beidi Chen}{beidic@andrew.cmu.edu}
\icmlcorrespondingauthor{Clark Barrett}{barrett@cs.stanford.edu}
\icmlcorrespondingauthor{Joseph E. Gonzalez}{jegonzal@cs.berkeley.edu}
\icmlcorrespondingauthor{Percy Liang}{pliang@cs.stanford.edu}
\icmlcorrespondingauthor{Christopher R\'e}{chrismre@cs.stanford.edu}
\icmlcorrespondingauthor{Ion Stoica}{istoica@cs.berkeley.edu}
\icmlcorrespondingauthor{Ce Zhang}{ce.zhang@inf.ethz.ch}

\icmlkeywords{large language models, memory optimizations, offloading, compression, generative pre-trained transformers}

\vskip 0.3in
\else
\twocolumn[
\icmltitle{
FlexGen: High-Throughput Generative Inference of Large Language Models \\ with a Single GPU
}
\icmlsetsymbol{equal}{*}

\begin{icmlauthorlist}
\icmlauthor{Ying Sheng}{stanford}
\icmlauthor{Lianmin Zheng}{berkeley}
\icmlauthor{Binhang Yuan}{eth}
\icmlauthor{Zhuohan Li}{berkeley}
\icmlauthor{Max Ryabinin}{yandex,hse}
\icmlauthor{Daniel Y. Fu}{stanford}
\icmlauthor{Zhiqiang Xie}{stanford}
\\
\icmlauthor{Beidi Chen}{meta,cmu}
\icmlauthor{Clark Barrett}{stanford}
\icmlauthor{Joseph E. Gonzalez}{berkeley}
\icmlauthor{Percy Liang}{stanford}
\icmlauthor{Christopher R\'e}{stanford}
\icmlauthor{Ion Stoica}{berkeley}
\icmlauthor{Ce Zhang}{eth}
\end{icmlauthorlist}

\icmlaffiliation{stanford}{Stanford University}
\icmlaffiliation{berkeley}{UC Berkeley}
\icmlaffiliation{eth}{ETH Zurich}
\icmlaffiliation{meta}{Meta}
\icmlaffiliation{yandex}{Yandex}
\icmlaffiliation{cmu}{Carnegie Mellon University}
\icmlaffiliation{hse}{HSE University}

\icmlcorrespondingauthor{Ying Sheng}{ying1123@stanford.edu}

\icmlkeywords{FlexGen: large language models, memory optimizations, offloading, compression, generative pre-trained transformers}

\vskip 0.3in
]
\fi

\printAffiliationsAndNotice{}  %

\begin{abstract}
The high computational and memory requirements of large language model (LLM) inference make it feasible only with multiple high-end accelerators.
Motivated by the emerging demand for latency-insensitive tasks with batched processing, this paper initiates the study of high-throughput LLM inference using limited resources, such as a single commodity GPU.
We present \sys, a high-throughput generation engine for running LLMs with limited GPU memory.
\sys can be flexibly configured under various hardware resource constraints by aggregating memory and computation from the GPU, CPU, and disk. By solving a linear programming problem, it searches for efficient patterns to store and access tensors.
\sys further compresses the weights and the attention cache to 4 bits with negligible accuracy loss.
These techniques enable \sys to have a larger space of batch size choices and thus significantly increase maximum throughput.
As a result, when running OPT-175B on a single 16GB GPU, \sys achieves significantly higher throughput compared to state-of-the-art offloading systems, reaching a generation throughput of 1 token/s for the first time with an effective batch size of $144$. 
On the HELM benchmark, \sys can benchmark a 30B model with a 16GB GPU on 7 representative sub-scenarios in 21 hours.
The code is available at \url{https://github.com/FMInference/FlexGen}.
\end{abstract}

\section{Introduction}
In recent years, large language models (LLMs) have demonstrated strong performance across a wide range of tasks~\cite{brown2020language,bommasani2021opportunities,zhang2022opt,chowdhery2022palm}.
Along with these unprecedented capabilities, generative LLM inference comes with unique challenges.
These models can have billions, if not trillions of parameters~\cite{chowdhery2022palm,fedus2022switch}, which leads to extremely high computational and memory requirements to run. For example, GPT-175B requires $325$GB of GPU memory simply to load its model weights. Fitting this model onto GPUs would require at least five A100 (80GB) GPUs and complex parallelism strategies~\cite{pope2022efficiently, aminabadi2022deepspeed}. Thus, lowering LLM inference resource requirements has recently attracted intense interest.

\begin{figure}[t]
	\centering
\includegraphics[width=0.91\mywidth]{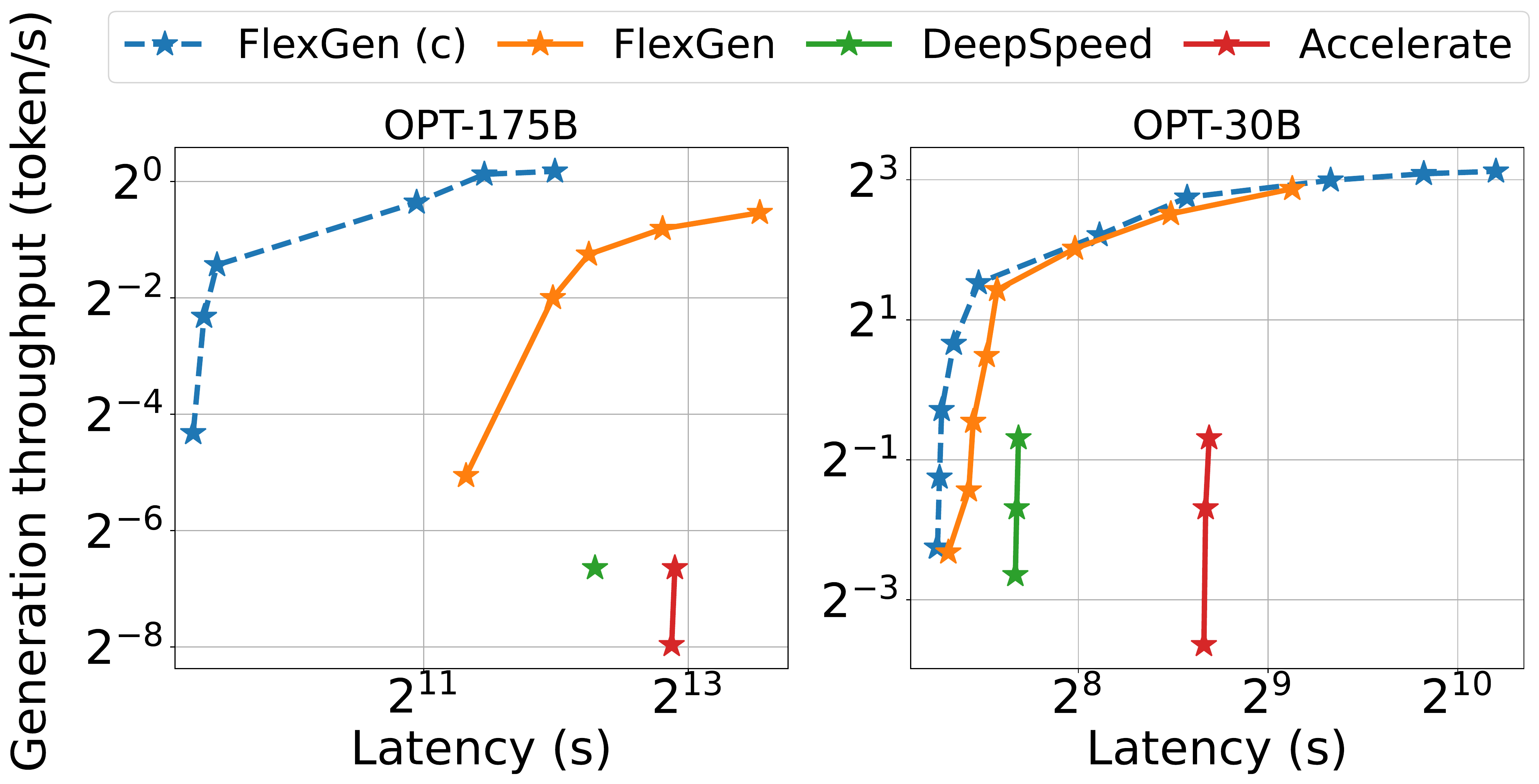}
\vspace{-0.8em}
\caption{The total latency for a block and throughput trade-offs of three offloading-based systems for OPT-175B (left) and OPT-30B (right) on a single NVIDIA T4 (16 GB) GPU with 208 GB CPU DRAM and 1.5TB SSD. \sys achieves a new Pareto-optimal frontier with $100\times$ higher maximum throughput for OPT-175B. Other systems cannot further increase throughput due to out-of-memory issues. ``(c)'' denotes compression.}
\vspace{-1.5em}
\label{fig:throughput_vs_latency}
\end{figure}

In this paper, we focus on a  
setting that we call \textit{throughput-oriented generative inference}.
In addition to interactive use cases such as chatbots, LLMs are also applied to 
many ``back-of-house'' tasks such as 
benchmarking~\cite{liang2022holistic}, information extraction~\cite{xsum}, data wrangling~\cite{narayan2022can}, and form processing~\cite{chen2021spreadsheetcoder}. One key characteristic of these tasks is that they often require running LLM inference in 
batches over a large number of tokens (e.g., all the documents in a company's corpus),
and are less sensitive to latency.
As a result, it is
possible to trade off latency for higher throughput in these workloads, providing opportunities to 
reduce resource requirements.

Prior efforts to lower resource requirements of LLM inference correspond to three directions:
(1) \textit{model compression} to decrease total memory footprint~\cite{dettmers2022gptint,yao2022zeroquant,frantar2022gptq,xiao2022smoothquant};
(2) \textit{collaborative inference} to amortize inference cost via decentralization~\cite{borzunov2022petals};
and (3) \textit{offloading} to utilize memory from CPU and disk~\cite{aminabadi2022deepspeed, huggingfaceAccelerate}.
These techniques have significantly lowered the resource requirements for using LLMs, but there are distinct limitations.
Research in the first two directions often assume that the model
fits into the GPU memory and thereby struggle to run 175B-scale models with a single commodity GPU. On the other hand, state-of-the-art offloading-based systems in the third category do not achieve acceptable throughput on a single GPU due to inefficient I/O scheduling and tensor placement. For example, these systems can be bottlenecked by small batch sizes (e.g., batch sizes of only one or two for OPT-175B in some cases). 

\begin{wrapfigure}{r}{0.17\textwidth}
\includegraphics[width=0.17\textwidth]{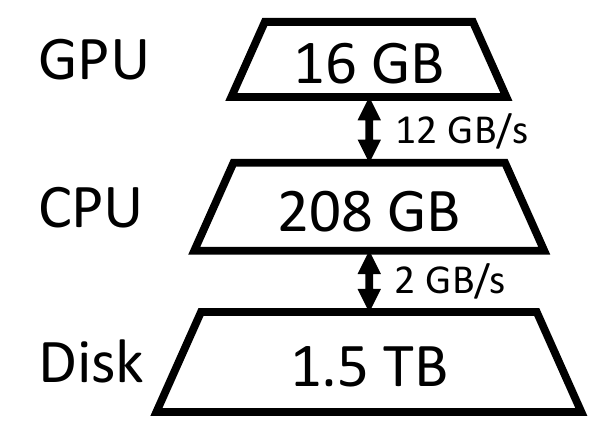}
\vspace{-2em}
\end{wrapfigure}

Our focus is designing efficient \textit{offloading} strategies for high-throughput generative inference, \textit{on a single commodity GPU}. To run an LLM with limited GPU memory, we can offload it to secondary storage and perform computation part-by-part by partially loading it. On a typical machine, there are three levels of the memory hierarchy, as illustrated in the figure to the right. Higher levels are faster but scarce, while lower levels are slower but abundant.
In throughput-oriented scenarios, we can sacrifice latency by using a large batch size, and amortize the expensive I/O operations among different memory hierarchies over a large batch of inputs, overlapped with computation.
\cref{fig:throughput_vs_latency} shows the latency-throughput trade-off of three inference systems with offloading on a single NVIDIA T4 (16 GB) GPU. Note that the performance in terms of latency and throughput on limited resources is significantly inferior to that of the cases with sufficient resources.

Achieving high-throughput generative inference with limited GPU memory is challenging even if we can sacrifice the latency.
The first challenge is to design an \textit{efficient offloading strategy}.
During generative inference, there are three kinds of tensors: weights, activations, and key-value (KV) cache. The strategy should specify what tensors to offload, where to offload them within the three-level memory hierarchy, and when to offload them during inference.
The batch-by-batch, token-by-token, and layer-by-layer structure of the computation forms a complex dependency graph where there are multiple ways to conduct computation. Together, these choices form a complex design space.
Existing offloading-based inference systems~\cite{aminabadi2022deepspeed, huggingfaceAccelerate} inherit strategies from training, which turn out to be some suboptimal points for inference, performing excessive I/O and achieving throughput far below theoretical hardware limits.

The second challenge is to develop \textit{effective compression strategies}. Previous works have demonstrated promising results in compressing the weights and activations of LLMs. However, when combining compression with offloading for high-throughput inference, the I/O costs and memory reduction of the weights and KV cache become more important, motivating alternative compression schemes.

To address these challenges, we present \sys, an offloading framework for high-throughput LLM inference.
\sys aggregates memory from the GPU, CPU, and disk, and efficiently schedules I/O operations, along with possible compression methods and distributed pipeline parallelism.

{\bf \underline{(Contribution 1)}}
We formally define a search space of possible offloading strategies by considering computation schedule, tensor placement, and computation delegation.
We prove that our search space captures a computation order with I/O complexity within $2\times$ of optimality.
We then develop a linear programming-based search algorithm to optimize the throughput within the search space.
This algorithm can be configured for various hardware specifications and can be easily extended to incorporate latency and throughput constraints, thus helping to navigate the trade-off space smoothly.
Compared with existing strategies, our solution unifies the placement of weights, activations, and the KV cache, enabling a dramatically higher batch size upper bound, which is key to achieving high throughput.

{\bf \underline{(Contribution 2)}}
We show that it is possible to compress both the weights and KV cache for LLMs like OPT-175B to 4 bits without retraining or calibration, all with negligible accuracy loss. This is achieved through fine-grained group-wise quantization~\cite{shen2020q}, which is suitable for reducing I/O costs and memory usage during offloading.

{\bf \underline{(Contribution 3)}}
We demonstrate the efficiency of \sys by running OPT-175B on NVIDIA T4 (16GB) GPUs. 
Compared to DeepSpeed Zero-Inference~\cite{aminabadi2022deepspeed} and Hugging Face Accelerate~\cite{huggingfaceAccelerate}, two state-of-the-art offloading-based inference systems, \sys often allows a batch size that is orders of magnitude larger.
As a result, \sys can achieve much higher throughputs. On a single T4 GPU with 208 GB CPU DRAM and 1.5 TB SSD, input sequence length 512, and output sequence length 32:
\begin{itemize}
\item With the same latency of $5000$ seconds, \sys (effective batch size 64, or 2048 tokens in total) can achieve more than $40\times$ higher throughput than DeepSpeed Zero-Inference (batch size 1, or 32 tokens in total), while Hugging Face Accelerate cannot complete a single batch.
\item By allowing a higher latency of $12000$ seconds, FlexGen achieves $69\times$
higher maximum throughput compared to baselines because it can enlarge the effective batch size to $256$ (8192 tokens generated in total), while DeepSpeed Zero-Inference and Hugging Face Accelerate cannot use a batch size larger than 2 due to out-of-memory issues.
\item If allowing 4-bit compression, \sys can reach $100\times$ higher maximum throughput with effective batch size 144 (4608 tokens generated in total) with latency 4000 seconds by holding all weights in CPU and getting rid of disk offloading.
\end{itemize}

We also compare offloading and decentralized collective inference based on \sys and Petals~\cite{borzunov2022petals} as two representative systems. We conduct comparisons between the two systems from the aspects of delay and bandwidth of the decentralized network and output sequence length.
The results show that \sys outperforms a decentralized Petals cluster in terms of per-GPU throughput and can even achieve lower latency in certain cases.

\section{Related Work}
\label{sec:related_work}

Given the recent advances of LLMs, LLM inference has become an important workload, encouraging active research from both the \textbf{system} side and the \textbf{algorithm} side. 

Recent years have witnessed the emergence of systems specialized for LLM inference, such as FasterTransformer~\cite{nvidiaft},  Orca~\cite{yu2022orca}, LightSeq~\cite{wang2021lightseq}, PaLM inference~\cite{pope2022efficiently}, TurboTransformers~\cite{fang2021turbotransformers}, DeepSpeed Inference~\cite{aminabadi2022deepspeed}, and Hugging Face Accelerate~\cite{huggingfaceAccelerate}. Unfortunately, most of these systems focus on latency-oriented scenarios with high-end accelerators, limiting their deployment for throughput-oriented inference on easily accessible hardware.
To enable LLM inference on such commodity hardware, offloading is an essential technique --- as far as we know, among current systems, only DeepSpeed Zero-Inference and Hugging Face Accelerate support offloading.
These inference systems typically inherit the offloading techniques from training systems~\cite{rajbhandari2021zero, ren2021zero,li2022harmony,huang2020swapadvisor,wang2018superneurons} but ignore the special computational property of generative inference.
They fail to exploit the structure of the throughput-oriented LLM inference computation and miss great opportunities for efficient scheduling of I/O traffic. Another attempt to enable LLM inference on accessible hardware is collaborative computing proposed by Petals~\cite{borzunov2022petals}.

There are also many algorithm-oriented works that relax certain aspects of computation in LLM inference to accelerate the computation or reduce the memory footprint.
Both sparsification~\cite{hoefler2021sparsity, frantar2023massive} and quantization~\cite{kwon2022alphatuning,yao2022zeroquant,park2022nuqmm,xiao2022smoothquant,frantar2022gptq,dettmers2022gptint} have been adopted for LLM inference.
On the quantization side, prior works have shown weights can be compressed down to 3 bits without compressing activations~\cite{frantar2022gptq}, or both weights and activations can be compressed to 8 bits~\cite{yao2022zeroquant, dettmers2022gptint, xiao2022smoothquant}.
In \sys, we compress both the weights and KV cache to 4 bits and show how to combine the compression with offloading to make further improvements.

Within broader domains, memory optimizations and offloading have been studied for training~\cite{huang2020swapadvisor,ren2021zero,steiner2022olla} and linear algebra~\cite{jia1981complexity,demmel2013communication}.

\section{Background: LLM Inference}
\label{sec:background}
In this section, we describe the LLM inference workflow and its memory footprint.

\textbf{Generative Inference.}
A typical LLM generative inference task consists of two stages: i) the \textit{prefill} stage which takes a prompt sequence to generate the key-value cache (KV cache) for each transformer layer of the LLM; and ii) the \textit{decoding} stage which utilizes and updates the KV cache to generate tokens step-by-step, where the current token generation depends on previously generated tokens.

For a particular inference computation, denote the batch size by $b$, the input sequence length by $s$, the output sequence length by $n$, the hidden dimension of the transformer by $h_1$, the hidden dimension of the second MLP layer by $h_2$, and the total number of transformer layers by $l$. Given the weight matrices of a transformer layer specified by $\wk^i$, $\wq^i$, $\wv^i$, $\wo^i$, $\wa^i$, $\wb^i$, where $\wk^i, \wq^i, \wv^i, \wo^i \in \mathcal{R}^{h_1 \times h_1} $, $\wa \in \mathcal{R}^{h_1 \times h_2}$, and $\wb \in \mathcal{R}^{h_2 \times h_1}$.

During the \textit{\textbf{prefill phase}}, the input of the $i$-th layer is specified by $\x^i$, and \texttt{key}, \texttt{value}, \texttt{query}, and \texttt{output} of the attention layer is specified by $\xk^i$, $\xv^i$, $\xq^i$, $\xo^i$, where $\x^i, \xk^i, \xv^i, \xq^i, \xo^i \in \mathcal{R}^{b \times s \times h_1}$. Then, the cached \texttt{key}, \texttt{value} can be computed by:
\begin{equation*}
\begin{array}{cc}
     & \xk^i = \x^i \cdot \wk^i; \quad  \xv^i = \x^i \cdot \wv^i
\end{array}
\end{equation*}
The rest of the computation in the $i$-th layer is:
\begin{equation*}
\begin{array}{cc}
& \xq^i = \x^i \cdot \wq^i \\
& \xo^i = f_{\textsf{Softmax}}\left( \frac{\xq^i {\xk^i}^T}{\sqrt{h}}\right )\cdot \xv^i \cdot \wo^i + \x^i \\
& \x^{i+1} = f_{\textsf{relu}}\left(\xo^i \cdot \wa \right) \cdot \wb + \xo^i
\end{array}
\end{equation*}
During the \textit{\textbf{decode phase}}, given $\tx^i \in \mathcal{R}^{b \times 1 \times h_1}$ as the embedding of the current generated token in the $i$-th layer, the inference computation needs to i) update the KV cache:
\begin{equation*}
\begin{array}{cc}
     & \xk^i \leftarrow \textsf{Concat}\left( \xk^i, \tx^i \cdot \wk^i \right)\\
     & \xv^i \leftarrow \textsf{Concat}\left( \xv^i, \tx^i \cdot \wv^i \right)
\end{array}
\end{equation*}
 and ii) compute the output of the current layer:
\begin{equation*}
\begin{array}{cc}
     & \tq^i = \tx^i \cdot \wq^i \\
     & \txo^i = f_{\textsf{Softmax}}\left( \frac{\tq^i {\xk^i}^T}{\sqrt{h}}\right)\cdot \xv^i \cdot \wo^i + \tx^i\\
     & \tx^{i+1} = f_{\textsf{relu}}\left(\txo^i \cdot \wa \right) \cdot \wb + \txo^i
\end{array}
\end{equation*}

\textbf{Memory Analysis.}
The memory footprint of LLM inference mainly comes from the model weights and the KV cache. Considering the OPT-175B model in \texttt{FP16}, the total number of bytes to store the parameters can be roughly \footnote{We ignore the embedding layer(s), which is relatively small.}
 calculated by $l (8h_1^2+ 4h_1 h_2 )$.
The total number of bytes to store the KV cache in peak is $4 \times  b  l  h_1 (s+n) $.

In a realistic setting with a sufficient number of GPUs, the OPT-175B model ($l=96, h_1=12288, h_2=49152$) takes $325$ GB. With a batch size of $b=512$, an input sequence length $s=512$, and an output sequence length of $n=32$, the total memory required to store the KV cache is $1.2$ TB, which is $3.8\times$ the model weights, making the KV cache a new bottleneck of large-batch high-throughput inference.
In \sys, for OPT-175B, we enlarge the effective batch size to $256$ to achieve the throughput at $0.69$ token/s.

\textbf{Throughput and Latency.}
Considering an effective batch size $b$, an input sequence length $s$, and an output sequence length of $n$, the latency $t$ is defined as the total number of seconds spent to process the prompts and generate all the $bn$ tokens. The generation throughput is defined as $bn / t$.

\begin{figure}[h]
\centering
\includegraphics[width=0.95\mywidth]{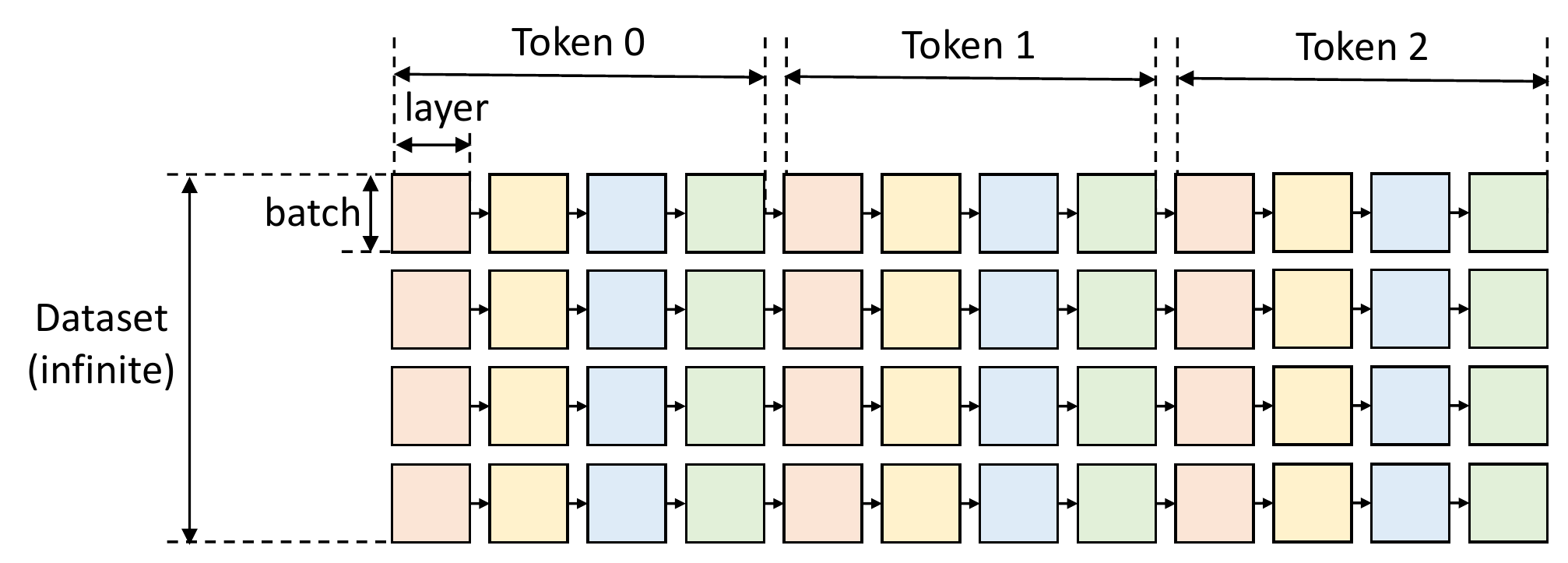}
\vspace{-1em}
\caption{Computational graph of LLM inference.} \label{fig:computational_graph}
\vspace{-0.5em}
\end{figure}

\section{Offloading Strategy}
\label{sec:offload}

In this section, we do not relax any computation of LLM inference and illustrate how to formalize the offloading procedure under the GPU, CPU, and disk memory hierarchy. We first formulate the problem and then construct the search space of the possible offloading strategies in \sys.
To find an efficient strategy, \sys builds an analytical cost model and searches for configurations with an optimizer based on linear programming.

\subsection{Problem Formulation}
\label{subsec:formulation}
Consider a machine with three devices: a GPU, a CPU, and a disk. The GPU and CPU can perform computation while the disk cannot. The three devices form a three-level memory hierarchy where the GPU has the smallest but fastest memory and the disk has the largest but slowest memory.
When an LLM cannot fit entirely within the GPU, we need to offload it to secondary storage and perform computation part-by-part by partially loading the LLM.

We formulate the generative inference with offloading as a graph traversal problem.
\cref{fig:computational_graph} shows an example computational graph, where the model has 4 layers and we generate 3 tokens per prompt. As our focus is throughput-oriented scenarios, we assume a given dataset with an infinite number of prompts that need to be processed.
In the figure, a square means the computation of a GPU batch for a layer. The squares with the same color share the same layer weights. We define a valid path as a path that traverses (i.e., computes) all squares, while subject to the following constraints:
\begin{itemize}
    \item A square can only be computed if all squares to its left on the same row were computed.
    \item To compute a square on a device, all its inputs (weights, activations, cache) must be loaded to the same device.
    \item After being computed, a square produces two outputs: activations and KV cache. The activations should be stored until its right sibling is computed. The KV cache should be stored until the rightmost square on the same row is computed.
    \item At any time, the total size of tensors stored on a device cannot exceed its memory capacity.
\end{itemize}

The goal is to find a valid path that minimizes the total execution time, which includes the compute cost and I/O cost when moving tensors between devices.

\subsection{Search Space}
\label{subsec:search-space}
Given the formulation above, we construct a search space for possible valid strategies in \sys.

\textbf{Compute schedule.}
Intuitively, there are two orders to traverse the graph in \cref{fig:computational_graph}: row-by-row and column-by-column.
All existing systems~\cite{aminabadi2022deepspeed, huggingfaceAccelerate} traverse the graph row-by-row, as shown in \cref{fig:block_schedule}(a). This is reasonable because it is the fastest way to finish the generation for one batch and the KV cache can be freed immediately after a row. However, because every two contiguous squares do not share weights, this schedule has to repeatedly load the weights and incurs huge I/O costs.

\ifisarxiv
\begin{figure}[t]
  \begin{minipage}{0.47\textwidth}
    \centering
    \vspace{-1em}
    \includegraphics[width=\linewidth]{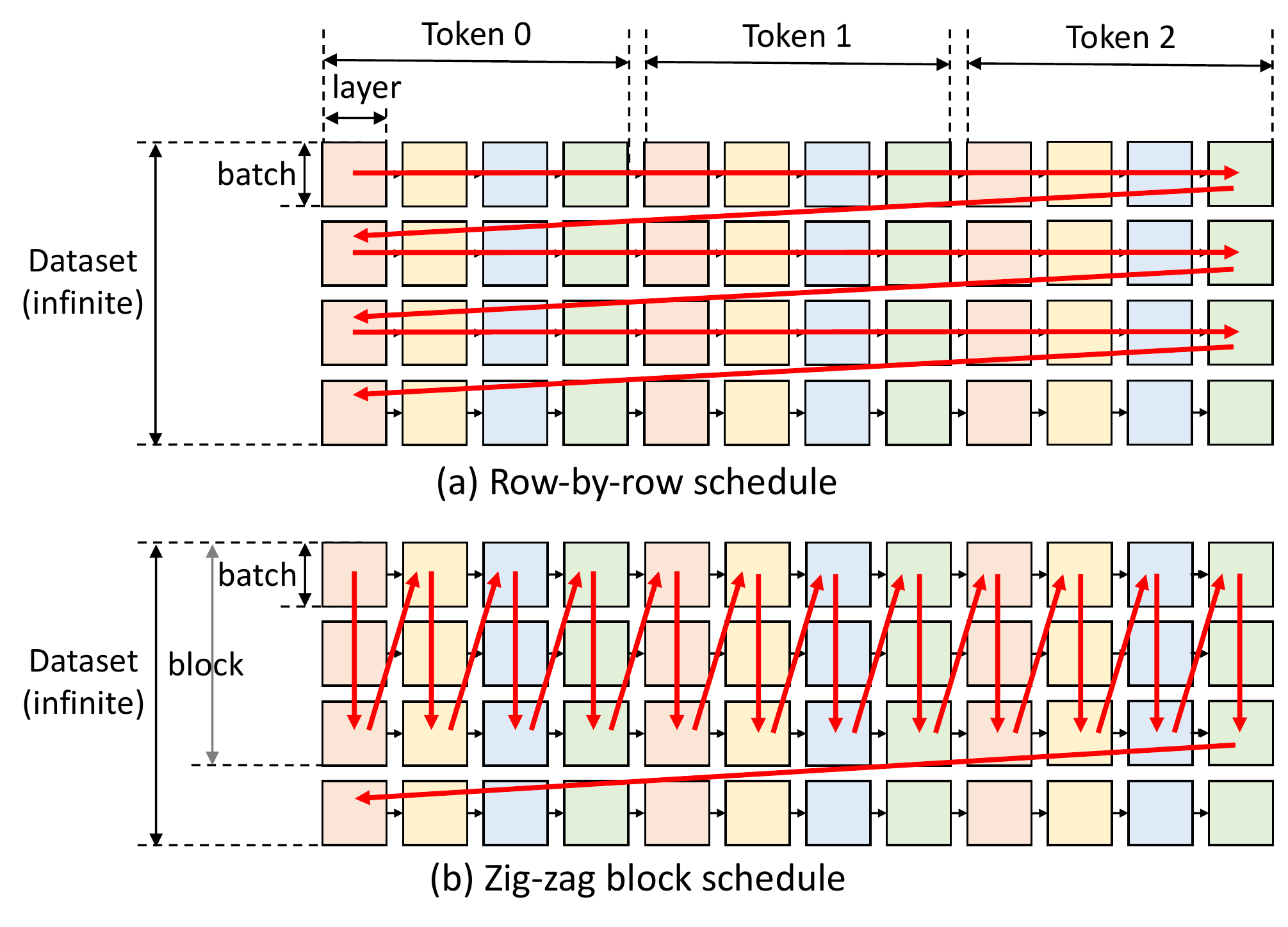}
    \vspace{-1em}
    \caption{Two different schedules. The red arrows denote the computation order.} \vspace{-1em}
    \label{fig:block_schedule}
  \end{minipage}
  \hfill
  \begin{minipage}{0.47\textwidth}
    \centering
\footnotesize
\vspace{-1em}
\begin{boxedalgorithmic}
\FOR{$i=1$ {\bfseries to} $generation\_length$}
  \FOR{$j=1$ {\bfseries to} $num\_layers$}
    \STATE // Compute a block with multiple GPU batches
    \FOR{$k=1$ {\bfseries to} $num\_GPU\_batches$}
      \STATE // Load the weight of the next layer
      \STATE \texttt{load\_weight}$(i, j+1, k)$
      \STATE // Store the cache and activation of the prev batch
      \STATE \texttt{store\_activation}$(i, j, k-1)$
      \STATE \texttt{store\_cache}$(i, j, k-1)$
      \STATE // Load the cache and activation of the next batch
      \STATE \texttt{load\_cache}$(i, j, k+1)$
      \STATE \texttt{load\_activation}$(i, j, k+1)$
      \STATE // Compute this batch
      \STATE \texttt{compute}$(i, j, k)$
      \STATE // Synchronize all devices
      \STATE \texttt{synchronize}$()$
    \ENDFOR
  \ENDFOR
\ENDFOR
\end{boxedalgorithmic}
\caption{Block schedule with overlapping.}\label{alg:block-schedule}
\vspace{-1em}
\label{algo:overlap}
\vspace{-1em}
\end{minipage}
\end{figure}

\else

\begin{figure}[t]
\centering
\includegraphics[width=0.95\mywidth]{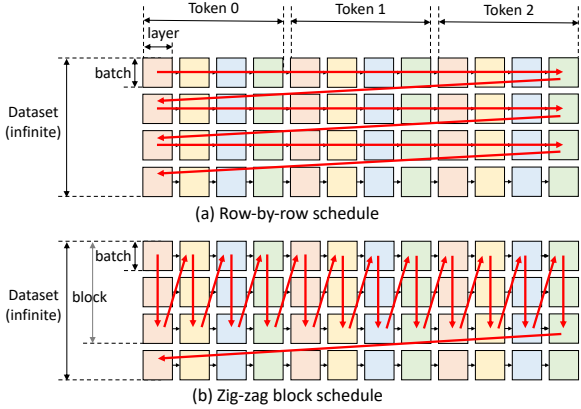}
\vspace{-0.5em}
\caption{Two different schedules. The red arrows denote the computation order.} \label{fig:block_schedule}
\end{figure}

\begin{figure}[t]
\vspace{-1em}
\begin{algorithm}[H]
\centering
\footnotesize
\caption{Block Schedule with Overlapping}\label{alg:block-schedule}
\begin{algorithmic}
\FOR{$i=1$ {\bfseries to} $generation\_length$}
  \FOR{$j=1$ {\bfseries to} $num\_layers$}
    \STATE // Compute a block with multiple GPU batches
    \FOR{$k=1$ {\bfseries to} $num\_GPU\_batches$}
      \STATE // Load the weight of the next layer
      \STATE \texttt{load\_weight}$(i, j+1, k)$
      \STATE // Store the cache and activation of the prev batch
      \STATE \texttt{store\_activation}$(i, j, k-1)$
      \STATE \texttt{store\_cache}$(i, j, k-1)$
      \STATE // Load the cache and activation of the next batch
      \STATE \texttt{load\_cache}$(i, j, k+1)$
      \STATE \texttt{load\_activation}$(i, j, k+1)$
      \STATE // Compute this batch
      \STATE \texttt{compute}$(i, j, k)$
      \STATE // Synchronize all devices
      \STATE \texttt{synchronize}$()$
    \ENDFOR
  \ENDFOR
\ENDFOR
\end{algorithmic}
\label{algo:overlap}
\end{algorithm}
\vspace{-2em}
\end{figure}
\fi

To reduce the I/O costs of the weights, we can traverse the graph column-by-column. All squares in a column share weights, so we can let the weights stay on GPU for reusing and only load/unload the activations and KV cache. However, we cannot traverse a column all the way to the end because the activations and KV cache still need to be stored. 
Hence, we have to stop when they fill the CPU and disk memory.
Taking all this into consideration, we converge to a zig-zag block schedule, as shown in \cref{fig:block_schedule}(b). Besides, we propose another more advanced and I/O-optimal schedule, but only implement the simpler block schedule due to the practical implementation difficulty of the optimal one. However, we prove that the block schedule is at most twice worse than the optimal schedule in \cref{subsec:optimality}.
\begin{restatable}{theorem}{suboptimal}
\label{thm:suboptimal}
The I/O complexity of the zig-zag \\ block schedule is within $2\times$ of the optimal solution.
\end{restatable}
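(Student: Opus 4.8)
<br>

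The plan is to bound the total number of bytes the zig-zag block schedule moves between adjacent levels of the GPU--CPU--disk hierarchy, bound the same quantity from below over \emph{all} valid paths of the formulation in \cref{fig:computational_graph}, and show the former is at most twice the latter. Fix the per-layer weight-block size $w$, the per-GPU-batch activation size $a$, and the size $c_i$ of the KV-cache slice one GPU batch holds at one layer after token $i$; let $G$ be the number of GPU batches in the amortized stream, so both schedules cost $\Theta(G)$ and the ratio is well defined. Unrolling \cref{alg:block-schedule} with a block of $B$ GPU batches: inside a block the weights of each layer are reloaded once per $(\text{generation step},\text{layer})$, giving weight traffic $T_w(B)=\Theta\!\big(\tfrac{G}{B}\,n\,l\,w\big)$; and every batch evicts and re-reads its (tiny) activation and its current cache slice once per layer per token, giving $T_{ac}=\Theta\!\big(G\,l\,n\,a+G\,l\sum_{i=1}^{n}c_i\big)$, which does not depend on $B$. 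So $T_{\mathrm{block}}=T_w(B)+T_{ac}$, the search takes $B$ as large as the capacity constraints allow, and in the LLM-scale regime the weight term dominates, $T_{\mathrm{block}}=(1+o(1))\,T_w(B)$.

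The heart of the argument is a lower bound $T_{\mathrm{opt}}\ge\tfrac12 T_w(B)$ for an arbitrary valid path. Take any valid path and let $B'$ be the maximum, over all instants, of the number of distinct GPU batches whose partially filled KV caches are simultaneously resident in the non-disk (fast) memory. A layer's weights yield no progress while that layer is not being computed, and computing a layer-$j$ square for batch $k$ at token $i$ requires batch $k$'s layer-$j$ cache through token $i-1$ to be resident; hence a single weight reload of layer $j$ can be amortized over at most $\Theta(B')$ batches' worth of layer-$j$ squares, so the path's weight traffic is $\Theta\!\big(\tfrac{G}{B'}\,n\,l\,w\big)$. It then remains to show $B'\le 2B$: the block schedule spends on $B$ batches' caches at least half of the fast-memory budget usable for caches (the rest, in the worst case, going to the weights of the single layer currently in flight), while a competing path is subject to the same total budget and therefore cannot keep more than $2B$ batches' caches resident. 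Combining, $T_{\mathrm{opt}}\ge\Theta\!\big(\tfrac{G}{2B}\,n\,l\,w\big)=\tfrac12 T_w(B)$, and so $T_{\mathrm{block}}=(1+o(1))T_w(B)\le(2+o(1))\,T_{\mathrm{opt}}$.

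Making the ``idle weights give no progress / at most $2B$ resident caches'' step precise against \emph{every} valid path — including ones that interleave layers, tokens, and batches arbitrarily and distribute load unevenly across the three levels — is the step I expect to be the main obstacle; I would formalize it with a Hong--Kung-style phase argument: cut the timeline at I/O events, bound the number of squares computable within a phase by the bytes resident during that phase, and sum over phases. I would also verify separately that $T_{ac}$ is genuinely lower-order in the regime of interest (or, to drop any regime assumption, add the easy matching bound $T_{\mathrm{opt}}\ge(1-o(1))T_{ac}$, which holds because a single batch's full cache stack already rivals the whole model, forcing essentially every batch to stream its cache in and out once per layer per token in any order). Putting the pieces together yields $T_{\mathrm{block}}\le(2+o(1))\,T_{\mathrm{opt}}$, i.e.\ the claim; I expect the factor of two to be tight, pinned down exactly by the weights-versus-caches split of fast memory above, which is why the simple block schedule is retained alongside a separately described I/O-optimal schedule rather than attempting to shave the constant below two.
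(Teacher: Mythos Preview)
Your framework is in the right spirit—focus on weight I/O since KV-cache I/O is schedule-invariant and activation I/O is lower order—but the justification for the pivotal inequality $B'\le 2B$ is wrong, and the factor~$2$ does not come from where you think. You attribute it to ``the weights of the single layer currently in flight'' taking half the fast-memory budget, and you say the $2$ is ``pinned down exactly by the weights-versus-caches split of fast memory.'' The paper's argument has nothing to do with a weights-versus-caches split. The source of the $2$ is that the KV cache of a single batch \emph{grows linearly} over the $n$ decoding steps, from size proportional to~$s$ at token~$1$ to size proportional to~$s+n$ at token~$n$. The zig-zag schedule must size its block $B$ so that the \emph{peak} footprint (all $B$ batches at token~$n$) fits in memory~$M'$; but when computing token~$1$, those same $B$ batches occupy only about $s/(s+n)$ of $M'$, wasting roughly half the budget in the $n\gg s$ regime. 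The diagonal schedule keeps batches at \emph{staggered} token positions so that the mix of small and large caches always fills~$M'$, which is why it can hold roughly $2B$ batches and is I/O-optimal. So your conclusion ``at most $2B$ resident caches'' is correct, but for a completely different reason than you state; your argument as written would fail whenever the per-layer weight is small compared to~$M'$—which it always is in the offloading regime.

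The paper's proof is also structurally different from your batch-counting argument: rather than bounding a number~$B'$ of resident batches, it assigns each square a ``memory consumption'' equal to the cache it must hold when computed, sums these over all squares to a constant~$M$, and observes that any single weight load can serve squares whose consumptions sum to at most~$M'$; hence any schedule needs at least $\lceil M/M'\rceil$ weight loads, and the diagonal schedule achieves this bound. For zig-zag, the linear-growth observation above shows the average consumption per weight load is at least~$M'/2$, so zig-zag uses at most $\lceil 2M/M'\rceil$ loads—within~$2\times$ of optimal. This memory-sum accounting sidesteps your $B'$ entirely and directly yields the exact ratio $(2s+2n)/(2s+n)$, which interpolates between~$1$ (when $s\gg n$) and~$2$ (when $n\gg s$); in particular the factor~$2$ is tight only in the long-generation limit, not universally as your weights-versus-caches heuristic would suggest.
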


Another typical optimization is overlapping. 
We can overlap the weights load of the next layer, cache/activation load of the next batch, cache/activation store of the previous batch, and the computation of the current batch.
Adding overlapping to the block schedule results in \cref{alg:block-schedule}. The first six functions in the innermost loop can be seen as launched in parallel with six logical threads because there are no dependencies. The last function then synchronizes these six logical threads. We rely on operating systems and CUDA drivers to resolve the schedule of the underlying hardware resources.
As a conclusion, the algorithm introduces two parameters into our search space: the GPU batch size and the number of GPU batches in a block. 
The product of the GPU batch size and the number of GPU batches is called block size (or \textbf{effective batch size}).

\textbf{Tensor placement.} Besides compute schedule, a strategy should specify how to store these tensors within the memory hierarchy.
We use three variables $wg$, $wc$, and $wd$ to define the percentages of weights stored on GPU, CPU, and disk respectively.
Similarly, we use three variables  $hg$, $hc$, $hd$ to define the percentages of activations and use $cg$, $cc$, $cd$ for the KV cache.
Given the percentages, there are still multiple ways to partition the tensors. Taking weight tensors as an example,
from coarse grain to fine grain, we can partition the weights at the model granularity (e.g., assign $50\%$ of the layers in a model to the GPU), at the layer granularity (e.g., assign $50\%$ of the tensors in a layer to the GPU), or at the tensor granularity (e.g., assign $50\%$ of the elements in a tensor to the GPU).
Coarser granularity leads to lower runtime overhead but it is less flexible and its cost is difficult to analyze. Considering both the runtime overhead and desired flexibility, we use layer granularity for weights, and tensor granularity for activations and the KV cache.

\textbf{Computation delegation.}
While CPUs are much slower than GPUs, we find using CPU compute can still be beneficial in some cases. This is because the computation of attention scores during decoding is I/O-bounded. Consider a case where the KV cache is stored on the CPU. Computing the attention scores on the GPU requires moving the entire KV cache to the GPU, which incurs a substantial I/O cost as the KV cache is huge.
In contrast, computing the attention score on the CPU does not require moving the KV cache. It only requires moving the activations from the GPU to the CPU. Quantitatively, let $b$ be the GPU batch size, $s$ be the sequence length, and $h_1$ be the hidden size. The size of the moved KV cache is $b \times s \times h_1 \times 4$ bytes, and the size of the moved activation is $b \times h _1\times 4$ bytes, so computing attention score on CPU reduces I/O by $s \times$. For long sequences (e.g., $s \geq 512$), it is better to compute the attention scores on the CPU if the associated KV cache is not stored on the GPU.

\subsection{Cost Model and Policy Search}
\label{subsec:cost-model}
The schedule and placement in \cref{subsec:search-space} constructs a search space with several parameters. Now we develop an analytical cost model to estimate the execution time given these algorithm parameters and hardware specifications.

\textbf{Cost Model.}
The cost model predicts the latency during prefill for one layer denoted as $T_{pre}$, and the averaged latency during decoding for one layer denoted as $T_{gen}$ in one block.
The total latency for computing a block can then be estimated as
$T=T_{pre} \cdot l + T_{gen} \cdot (n-1) \cdot l$, where $l$ is the number of layers and $n$ is the number of tokens to generate.

Assuming perfect overlapping, $T_{pre}$ can be estimated as
$T_{pre} = \max(ctog^p, gtoc^p, dtoc^p, ctod^p, comp^p)$,
where $ctog^p$, $gtoc^p$, $dtoc^p$, $ctod^p$, $comp^p$ denote the latency of read from CPU to GPU, write from GPU to CPU, read from disk to CPU, write from CPU to disk, computation, respectively, during prefill for one layer.

Similarly, $T_{gen}$ can be estimated as
$T_{gen} = \max(ctog^g, gtoc^g, dtoc^g, ctod^g, comp^g),$
with $ctog^g$, $gtoc^g$, $dtoc^g$, $ctod^g$, $comp^g$ denoting the latency of read from CPU to GPU, write from GPU to CPU, read from disk to CPU, write from CPU to disk, computation, respectively, during decoding for one layer.

For I/O terms like $dtoc^g$, it is estimated by summing up the I/O events, which contain weights, activations, and cache reads.
The size of \texttt{FP16} weights for one transformer layer is $8h_1^2 + 4h_1\cdot h_2$ bytes, with $h_1$ denoting the hidden size, and $h_2$ denoting the hidden size of the second MLP layer.
Let $bls$ be the block size and $s$ be the prompt length;
then the size of activations for one layer is $2\cdot bls \cdot h_1$.
The size of the KV cache for one layer on average is $4 \cdot bls\cdot (s+\frac{n}{2}) \cdot h_1$.
We have to load $wd, hd, cd$ percent of weights, activations, and the KV cache from the disk respectively so that the total latency of disk read is
$dtoc^g = \frac{1}{\text{disk\_to\_cpu\_bandwidth}}((8h_1^2 + 4h_1\cdot h_2)\cdot wd + 4 \cdot bls\cdot (s+\frac{n}{2}) \cdot h_1\cdot cd + 2\cdot bls \cdot h_1\cdot hd)$.

Similarly for computation terms, we sum up all computation events, including matrix multiplications and batched matrix multiplications on the CPU and the GPU.

Besides latency estimation, we also estimate the peak memory usage of the GPU, CPU, and disk, and then we add memory constraints. The full cost model is in \cref{sec:full-cost-model}.

\textbf{Policy Search.}
A policy includes 11 variables: block size $bls$, GPU batch size $gbs$, weight placement $wg, wc, wd$, activation placement $hg, hc, hd$, and KV cache placement $cg, cc, cd$.
In practice, the percentage cannot be an arbitrary real number between $0$ and $1$, because the tensor cannot be split arbitrarily.
However, we relax the percentage variables in the cost model to be any real number between $0$ and $1$ since it is changing gradually. We solve the problem as a two-level optimization problem. We first enumerate a few choices of $(bls, gbs)$ tuple. Typically, $gbs$ is a multiple of 4, and $bls$ is less than 20 so there are not too many choices.
Then with the fixed $bls, gbs$, finding the best placement $p=(wg, wc, wd, cg, cc, cd, hg, hc, hd)$ becomes a linear programming problem shown in \cref{eq:policy_lp}. The linear programming problem can be solved very quickly because there are only 9 variables.
This formulation can also be flexibly extended to include latency constraints and model approximate methods such as compression.

\vspace{-2em}
\begin{equation}
\begin{array}{rrclcl}
\displaystyle \min_{p} & \multicolumn{3}{c}{T/bls} \\
\textrm{s.t.} 
&gpu\ peak\ memory &<& gpu\ mem\ capacity\\
&cpu\ peak\ memory &<& cpu\ mem\ capacity\\
&disk\ peak\ memory &<& disk\ mem\ capacity\\
&wg+wc+wd&= &1\\
&cg+cc+cd&= &1\\
&hg+hc+hd&= &1
\end{array}
\label{eq:policy_lp}
\end{equation}
\vspace{-1em}

To use the cost model, we run profiling on the hardware to sample some data points and fit the hardware parameters. We then call the optimizer to get an offloading policy. Due to our relaxation and the hardness of accurately modeling peak memory usage (e.g., fragmentation), sometimes a strategy from the policy search can run out of memory. In this case, we manually adjust the policy slightly.
The cost model can usually return a good policy, but it is common that a better policy can be obtained by tuning manually.

\subsection{Extension to Multiple GPUs}
\label{subsec:multi-gpu}

We discuss how to extend the offloading strategy in \sys if there are multiple GPUs. Although we can find a nearly optimal strategy for one GPU, the strategy is still heavily limited by I/O and has a low GPU utilization.
If we are given more GPUs and more CPUs, model parallelism can be utilized to reduce the memory pressure of each GPU, which can potentially lead to a super-linear scaling in decoding.

There are two kinds of model parallelisms: tensor and pipeline parallelism~\cite{narayanan2021efficient,zheng2022alpa}. Tensor parallelism can reduce the single-query latency but pipeline parallelism can achieve good scaling on throughput due to its low communication costs. Since we target throughput, \sys implements pipeline parallelism.

We use pipeline parallelism by equally partitioning an $l$-layer LLM on $m$ GPUs, and then the execution of all GPUs follows the same pattern. The problem is reduced to running an $n/m$-layer transformer on one GPU. We can directly reuse the policy search developed for one GPU.
To achieve micro-batch pipelining, a new for-loop is added to \cref{alg:block-schedule} to combine the iteration-level pipeline parallel execution schedule \cite{huang2019gpipe, yu2022orca} with our single-device offloading runtime.

\section{Approximate Methods}
\label{sec:approximate}
The previous section focuses on the exact computation.
However, the inference throughput can be greatly boosted with negligible accuracy loss by allowing some approximations, because LLMs are typically robust to careful approximations.
This section introduces two such approximations: group-wise quantization and sparse attention.

\textbf{Group-wise Quantization.}
We show that both the weights and KV cache can be directly quantized into 4-bit integers without any retraining or calibration on OPT-175B, all while preserving similar accuracy (\cref{subsec:eval-approximation}).
When compared to some related works~\cite{yao2022zeroquant,dettmers2022gptint,xiao2022smoothquant} that try to use integer matrix multiplication mainly for accelerated computation,
the goal of quantization in our case is primarily for compression and reducing I/O costs.
Therefore, we can choose a fine-grained quantization format in favor of a high compression ratio and dequantize the tensors back to FP16 before computation.
We use a fine-grained group-wise asymmetric quantization method~\cite{shen2020q}.
Given a tensor, we choose $g$ contiguous elements along a certain dimension as a group. For each group, we compute the $min$ and $max$ of the group elements and quantize each element $x$ into $b$-bit integers by
$x_{quant} = round\left( \frac{x - min}{max - min} \times (2^b - 1)\right)$.

The tensors are stored in the quantized format and converted back to FP16 before computation.
Since both the weights and KV cache consume a significant amount of memory, we compress both to 4 bits with a group size of 64.
There are multiple ways to choose which dimension to group on.
We find that grouping the weights along the output channel dimension and the KV cache along the hidden dimension preserves the accuracy while being runtime-efficient in practice.
One thing to mention is that such a fine-grained group-wise quantization in \sys causes some overhead in compression and decompression.
Such an overhead could be very significant if run on a CPU which makes the CPU delegation useless, so we turn off the CPU delegation when enabling quantization.
A concurrent work~\cite{dettmers2022case} also finds that 4-bit precision is almost optimal for total model bits and zero-shot accuracy on OPT models. Compared to this previous work, we first propose to compress the KV cache and present the results on OPT-175B.

\textbf{Sparse Attention.}
We demonstrate that the sparsity of self-attention can be exploited by only loading the top 10\% attention value cache on OPT-175B, all while maintaining the model quality.
We present one simple Top-K sparse approximation. After computing the attention matrices, for each query, we calculate the indices of its Top-K tokens from the K cache. We then simply drop the other tokens and only load a subset of the V cache according to the indices.

The application of these approximations is straightforward. We present these preliminary but interesting results and intend to emphasize that \sys is a general framework that can seamlessly plug in many approximation methods.
\section{Evaluation}
\label{sec:evaluation}

\begin{table}
\vspace{-1em}
\caption{Hardware Specs}
\label{table:hardware_specs}
\centering
\footnotesize
\begin{tabular}{lll}
\toprule
Device & Model & Memory \\
\midrule
GPU & NVIDIA T4 & 16 GB \\
CPU & Intel Xeon @ 2.00GHz & 208 GB\\
Disk & Cloud default SSD (NVMe) & 1.5 TB \\
\bottomrule
\end{tabular}
\vspace{-2em}
\end{table}

\textbf{Hardware.}
We run experiments on the NVIDIA T4 GPU instances from Google Cloud.
The hardware specifications are listed in \autoref{table:hardware_specs}.
The read bandwidth of SSD is about 2GB/s and the write bandwidth is about 1GB/s.
Our methods and implementations do not depend on specific hardware architectures. Some architecture (e.g. unified memory) could be more friendly to our method.
See \Cref{subsec:more-exp} for discussions and experiments on different hardware setups.

\textbf{Model.}
OPT models~\cite{zhang2022opt} with 6.7B to 175B parameters are used in the evaluation. Although we do not evaluate other models, the offloading  in \sys can be applied to other transformer LLMs, e.g., GPT-3~\cite{brown2020language}, PaLM~\cite{chowdhery2022palm}, and BLOOM~\cite{scao2022bloom} because they all share a similar structure.

\textbf{Workload.}
Our focus is high-throughput generation on a given dataset. We use synthetic datasets where all prompts are padded to the same length. The system is required to generate 32 tokens for each prompt. We test two prompt lengths: 512 and 1024 (for experiments in more settings, see \cref{subsec:more-exp}).
The evaluation metric is generation throughput, defined as the number of generated tokens / (prefill time + decoding time).
Sometimes running a full batch takes too long for certain systems — in this cases, we generate fewer tokens and project the final throughput.
We use dummy model weights in throughput benchmarks for all systems and real weights for accuracy evaluations.

\textbf{Baseline.}
We use DeepSpeed ZeRO-Inference~\cite{aminabadi2022deepspeed} and Hugging Face Accelerate~\cite{huggingfaceAccelerate} as baselines. They are the only systems that can run LLMs with offloading when there is not enough GPU memory.
DeepSpeed supports offloading the whole weights to the CPU or disk. It uses ZeRO data parallelism if there are multiple GPUs. Accelerate supports offloading a fraction of the weights.
It does not support distributed GPUs on different machines.
Both of them use the row-by-row schedule and can only put cache/activations on GPU. These systems support different quantization methods. However, the quantization in Accelerate is not compatible with offloading, and the quantization in DeepSpeed cannot preserve accuracy up to 175B, so we do not enable quantization on these systems.
In addition to offloading, decentralized collaborative inference is another option to lower the resource requirement for LLM inference. %
Thus, we also include Petals~\cite{borzunov2022petals, ryabinin2023swarm}
as an additional baseline.

\textbf{Implementation.}
\sys is implemented on top of PyTorch~\cite{paszke2019pytorch}. \sys manages multiple CUDA streams and CPU threads to overlap I/O with compute. \sys creates files for tensors stored on the disk and maps them as virtual memory to access them.

\subsection{Offloading}
\label{sec:eval_offloading}
\textbf{Maximum throughput benchmark.}
We first evaluate the maximum generation throughput the systems can achieve with one GPU on two prompt lengths. As shown in \cref{table:e2e_throughput_1_gpu}, \sys outperforms all baselines in all cases. On OPT-6.7B, Accelerate and \sys can successfully fit the whole model into a single GPU, so they choose to only use the GPU. DeepSpeed has a higher memory overhead and cannot fit OPT-6.7B into the GPU, so it uses slower CPU offloading. On OPT-30B, all systems switch to CPU offloading. DeepSpeed and Accelerate store the KV cache on the GPU, so they cannot use a very large batch size, while \sys offloads most weights and all KV cache to the CPU and enables a larger GPU batch size. In addition, \sys reuses the weights by block scheduling. On OPT-175B, all systems start to  offload the weights to the disk. Baseline systems can only use a maximum batch size of 2, but \sys can use a GPU batch size of 32 and a block size of $32\times 8$, achieving a $69\times$ higher throughput. 
With compression enabled, \sys achieves a $112\times$ higher generation throughput on a single GPU for prompt sequence length 512. This huge improvement is because \sys uses an effective batch size of 144 and compresses the weights and KV cache to fit into CPU memory to avoid slow disk swapping.
More details on the policy setups and effective batch sizes can be found in \cref{subsec:more-exp}.
More experiments on how disk specification affects the throughput see \Cref{subsec:more-exp}.

\cref{table:e2e_throughput_4_gpu} shows the results on 4 machines, with one GPU on each machine. OPT-30B or OPT-175B still cannot fit into 4 GPUs.
Naively, we can run 4 independent \sys in a data-parallel fashion to get a linear scaling on throughput. But here we show that pipeline parallelism can achieve super-linear scaling on decoding throughput.
With pipeline parallelism, the memory pressure of each machine is reduced so we can switch from small batch sizes to larger batch sizes, or switch from disk offloading to CPU-only offloading.
In \cref{table:e2e_throughput_4_gpu}, \sys does not achieve linear scaling on generation throughput (which counts both prefill and decoding time costs).
This is because there are pipeline bubbles during the prefill stage and our workload settings only generate 32 tokens.
However, \sys achieves super-linear scaling on decoding throughput (which only counts decoding time costs assuming the prefill is done). This means if we generate more tokens, pipeline parallelism will show its benefits as decoding time will dominate.

\begin{table}[t]
\centering
\vspace{-0.5em}
\caption{Generation throughput (token/s) of different systems. Accelerate, DeepSpeed, and \sys use 1 GPU. Petals uses 1 GPU for OPT-6.7B, 4 GPUs for OPT-30B, and 24 GPUs for OPT-175B, but reports per-GPU throughput. We benchmark Petals under a good network assumption with a delay of less than 10ms and bandwidth of 1 Gbps. The models are run in INT8 as the default for Petals. See \cref{subsec:collaborative} for more details about Petals. \sys is our system without compression; \sys (c) uses 4-bit compression. ``OOM'' means out-of-memory.}
\label{table:e2e_throughput_1_gpu}
\footnotesize
\resizebox{\columnwidth}{!}{
\begin{tabular}{lrrrrrr}
\toprule
Seq. length & \multicolumn{3}{c}{512} & \multicolumn{3}{c}{1024} \\
\cmidrule(lr){2-4}\cmidrule(lr){5-7}
Model size  & 6.7B   & 30B   & 175B   & 6.7B     & 30B     & 175B     \\
 \midrule
Accelerate  & 25.12  & 0.62  & 0.01   & 13.01    & 0.31    & 0.01     \\
DeepSpeed   & 9.28   & 0.60  & 0.01   & 4.59     & 0.29    & OOM      \\
Petals      & 8.25 & 2.84 & 0.08 & 6.56 & 1.51  & 0.06     \\
\sys        & 25.26  & 7.32  & 0.69   & 13.72    & 3.50    & 0.35     \\
\midrule
\sys (c)    & 29.12  & 8.70  & 1.12   & 13.18    & 3.98    & 0.42     \\
\bottomrule
\end{tabular}
}
\vspace{-1em}
\end{table}

\begin{table}[t]

\centering
\caption{The scaling performance on 4 GPUs. The prompt sequence length is 512. The number of GPUs is denoted in the parenthesis. Generation throughput (token/s) counts the time cost of both prefill and decoding while decoding throughput only counts the time cost of decoding assuming prefill is done.}
\label{table:e2e_throughput_4_gpu}
\footnotesize
\resizebox{\columnwidth}{!}{
\begin{tabular}{lrrrrrr}
\toprule
Metric & \multicolumn{3}{c}{Generation Throughput} & \multicolumn{3}{c}{Decoding Throughput} \\
\cmidrule(lr){2-4}\cmidrule(lr){5-7}
Model size       & 6.7B   & 30B   & 175B  & 6.7B   & 30B    & 175B   \\
\midrule
\sys(1)   & 25.26  & 7.32  & 0.69  & 38.28  & 11.52  & 0.83   \\
\sys(4)   & 201.12 & 23.61 & 2.33  & 764.65 & 48.94  & 3.86   \\
DeepSpeed (4)  & 50.00  & 6.40  & 0.05   & 50.20  & 6.40   & 0.05    \\
\bottomrule
\end{tabular}
}
\vspace{-1em}
\end{table}

\textbf{Latency-throughput trade-off.}
We configure these systems to achieve maximum throughput under various latency constraints and draw their latency-throughput trade-off curves in \cref{fig:throughput_vs_latency}. \sys sets a new Pareto-optimal frontier that significantly outperforms baselines. On the low-latency side, \sys supports partial offloading and uses more space for weights. On the high-throughput side, \sys aggressively offloads all things out of the GPU to achieve a large GPU batch size and block size.
Given the same latency requirement of 5000 seconds, \sys without compression can achieve a $40\times$ higher throughput compared to DeepSpeed and Accelerate.
If allowing a higher latency and compression, \sys can further boost throughput and reach a $100\times$ improvement by using an effective batch size of 144. In this case, compression enables \sys to fit all things in the CPU memory and avoid disk I/O. The detailed latency, throughput, and policy setup can be found in \cref{subsec:more-exp}.

\textbf{Runtime breakdown.}
We shows the runtime breakdown of OPT-175B on \sys in \cref{table:breakdown} in \cref{subsec:more-exp}.
We disable overlapping and profile the time used for major components.
The GPU compute utilization is 82\% and 13\% for prefill and decoding, respectively.

\textbf{Ablation study.}
We then isolate the improvement brought by each individual technique.
\cref{table:tech_ablation} lists the throughput \sys can achieve if disabling one technique at a time.
On OPT-30B, with all optimizations enabled, we put $20\%$ weights on GPU, $80\%$ weights on CPU, and all activations and KV cache to CPU. We also choose a GPU batch size of $48$ and a block size of $48\times3$. ``No policy search'' illustrates the performance of worse strategies, showing the importance of a good policy.
On both models, using CPU compute and overlapping brings non-trivial improvement.
We also port the policy used in DeepSpeed/Accelerate into \sys runtime, showing the suboptimality of their policy. A more detailed ablation study can be found in \cref{subsec:more-exp}.

\textbf{HELM and Data wrangling.} We tested the interaction of \sys and HELM~\cite{liang2022holistic} by evaluating a new model OPT-IML-30B~\cite{iyer2022opt}, which has not been included in the official release of HELM. \sys finishes the benchmark of 7 representative sub-scenarios in 21 hours
, with all system overhead included, under the hardware setup described in \cref{table:hardware_specs}.
\cref{table:helm_integration} in \cref{subsec:more-exp} shows the details of the tasks and the corresponding running time.
We also use \sys to run the data wrangling tasks~\cite{narayan2022can} with OPT models. The detailed task configurations and running time are in \cref{subsec:more-exp}.

\begin{table}[t]
\centering
\vspace{-0.5em}
\caption{Ablation study of proposed techniques. The numbers are generation throughput on 1 GPU with prompt length 512. The gray tuple denotes a policy (GPU batch size $\times$ \#GPU-batch, $wg$, $wc$). More see \Cref{subsec:more-exp}.}
\label{table:tech_ablation}
\footnotesize
\resizebox{\columnwidth}{!}{
\begin{tabular}{l|lll}
\toprule
Model size          & 30B     & 175B    \\
\midrule
All optimizations   & 7.32 \color{gray}{(48$\times$3, 20, 80)} & 0.69 \color{gray}{(32$\times$8, 0, 50)} \\
No policy search    & 7.26 \color{gray}{(48$\times$3, 0, 100)} &  0.27 \color{gray}{(32$\times$1, 0, 50)}  \\

No overlapping      & 5.86   & 0.59     \\
No CPU compute      & 4.03   & 0.62     \\
No disk             & 7.32   & OOM      \\
w/ DeepSpeed policy & 1.57   & 0.01     \\
\bottomrule
\end{tabular}
}
\vspace{-2em}
\end{table}

\begin{table}[t]
\centering
\vspace{-0.5em}
\caption{The accuracy (higher is better) and perplexity (lower is better) with approximate methods.}
\label{table:approximations_accuracy}
\footnotesize
\resizebox{\columnwidth}{!}{
\begin{tabular}{lllllll}
\toprule
Dataset  & \multicolumn{3}{c}{Lambada (acc)} & \multicolumn{3}{c}{WikiText (ppl)} \\
\cmidrule(lr){2-4}\cmidrule(lr){5-7}
Config & FP16  &  4-bit  & 4-bit-S &  FP16  & 4-bit  & 4-bit-S \\
\midrule
OPT-30B    & 0.725 &  0.724 & 0.718      &  12.72 & 12.90 &  12.90   \\
OPT-175B   & 0.758 &  0.756 & 0.756      &  10.82 & 10.94 &  10.94   \\
\bottomrule
\end{tabular}
}
\vspace{-2em}
\end{table}

\subsection{Approximations}
\label{subsec:eval-approximation}
We use two tasks to show that our approximation methods exhibit negligible accuracy loss: next-word prediction on Lambada~\cite{paperno2016lambada} and language modeling on WikiText~\cite{merity2016pointer}.
As shown in \cref{table:approximations_accuracy}, ``4-bit'' means using group-wise quantization to compress both weights and KV cache into 4-bit integers.
``4-bit-S'' means combining the quantization and sparse attention with a 10\% sparsity on the value cache. Both methods show negligible accuracy loss compared to \texttt{FP16}. 
The results reveal the robustness of LLMs against these approximations.
We also tried 3-bit compression but it cannot preserve accuracy.

\subsection{Offloading vs. Collaborative Inference}
\label{subsec:collaborative}
We compare \sys and Petals under different network conditions by setting a private Petals cluster on GCP with 4 nodes having one T4 GPU per node.
We use Linux traffic control to constrain the connections between instances to simulate a realistic decentralized network and benchmark the performance of an OPT-30B model (input sequence length: 512, output sequence length: 32).
We tune the batch size of each request to be 2 and issue requests by 6 parallel client processes to achieve the maximum throughput\footnote{The batch size of 1 did not result in a noticeably better latency.}.
In addition, we normalize the throughput of Petals by the number of used GPUs.
As shown in \cref{fig:vs_petals}, we find that the throughput of \sys with a single T4 outperforms the per-GPU throughput of the Petals cluster under all tested network conditions.
Petals does not utilize offloading, so it cannot use a very large batch size, which limits its scaling on throughput.
Thus, we believe offloading could be a more efficient solution for throughput than communicating a large volume of activations in a long decentralized pipeline; on the other hand, collaborative inference can be a more viable option in more latency-sensitive scenarios.

Interestingly, we find that \sys can achieve lower latency than Petals in slow networks with short generation. We speculate this is because the network bandwidth becomes the bottleneck for activation transfer, and a large delay incurs a significant overhead on each communication step in the pipeline.
For the curve of a 100ms delay network, we can observe a cross point between \sys and Petals.
This is because the activations during prefill are larger than the activations during decoding by a factor of the input sequence length.
Thus, the communication overhead is proportionally larger,
which significantly slows down Petals during prefill.

\begin{figure}[ht]
\centering
\resizebox{\columnwidth}{!}{
\vspace{-1em}
\includegraphics[width=1\mywidth]{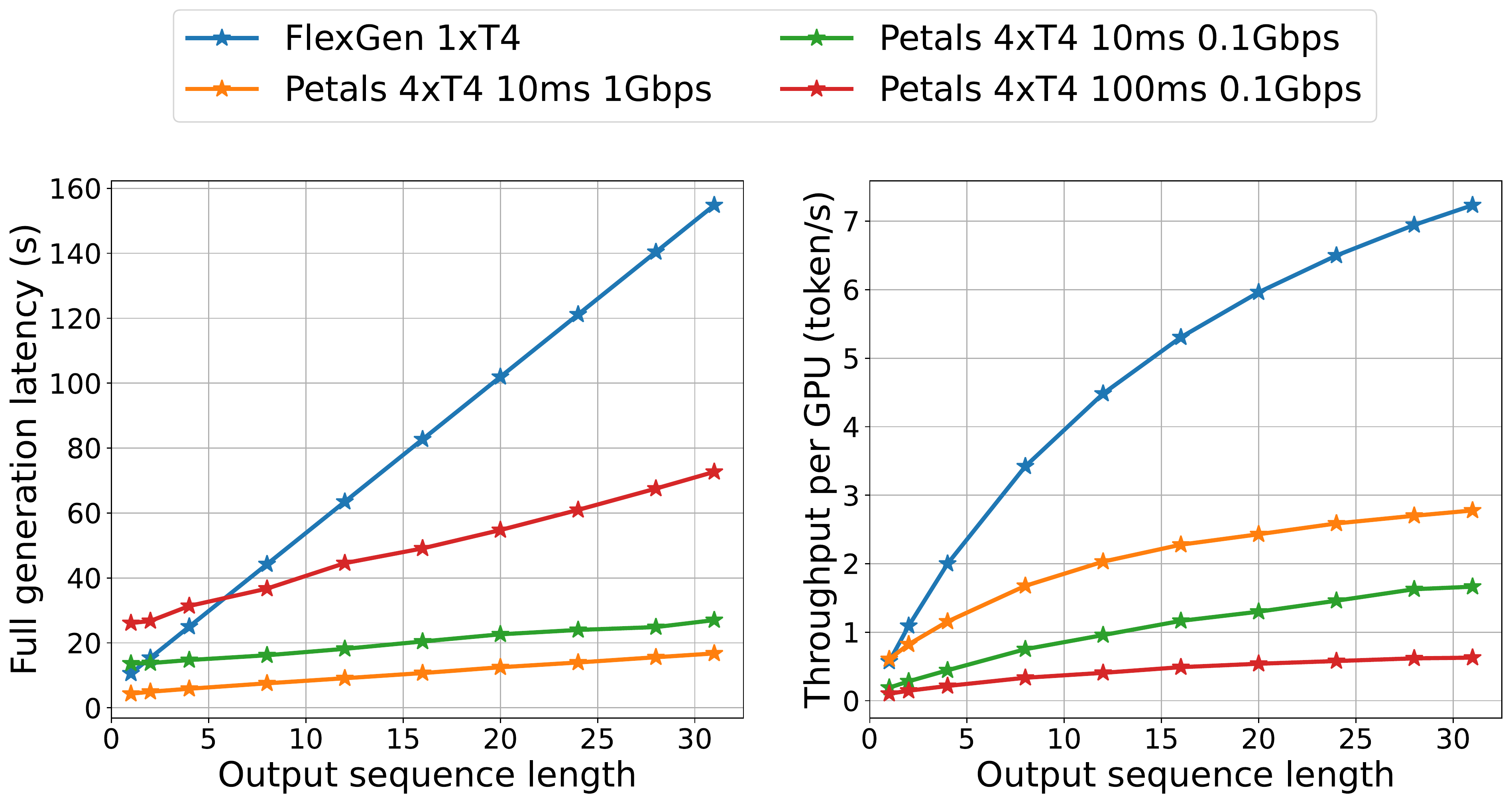}	
}
\vspace{-1.5em}
\caption{Full latency and per-GPU throughput of \sys and Petals in different network delay and bandwidth.}
\label{fig:vs_petals}
\vspace{-1em}
\end{figure}

\section{Conclusion}
\label{sec:conclusion}
We introduce \sys, a high-throughput generation engine for LLM inference, which focuses on latency-insensitive batch-processing tasks for resource-constrained scenarios.

\vspace{-0.5em}
\section*{Acknowledgements}
We would like to thank Clark Barrett and Joseph E. Gonzalez for funding support, and Zhiqiang Xie, Daniel Y. Fu, Hao Zhang, Nick Chow, Benjamin Spector, Guangxuan Xiao, Jue Wang, Arjun Desai, Yao Fu, Anjiang Wei, and Zihao Ye for their insightful review and discussions.

\bibliography{flexgen}

\begin{thebibliography}{45}
\providecommand{\natexlab}[1]{#1}
\providecommand{\url}[1]{\texttt{#1}}
\expandafter\ifx\csname urlstyle\endcsname\relax
  \providecommand{\doi}[1]{doi: #1}\else
  \providecommand{\doi}{doi: \begingroup \urlstyle{rm}\Url}\fi

\bibitem[Aminabadi et~al.(2022)Aminabadi, Rajbhandari, Awan, Li, Li, Zheng,
  Ruwase, Smith, Zhang, Rasley, et~al.]{aminabadi2022deepspeed}
Aminabadi, R.~Y., Rajbhandari, S., Awan, A.~A., Li, C., Li, D., Zheng, E.,
  Ruwase, O., Smith, S., Zhang, M., Rasley, J., et~al.
\newblock Deepspeed-inference: Enabling efficient inference of transformer
  models at unprecedented scale.
\newblock In \emph{2022 SC22: International Conference for High Performance
  Computing, Networking, Storage and Analysis (SC)}, pp.\  646--660. IEEE
  Computer Society, 2022.

\bibitem[Bommasani et~al.(2021)Bommasani, Hudson, Adeli, Altman, Arora, von
  Arx, Bernstein, Bohg, Bosselut, Brunskill,
  et~al.]{bommasani2021opportunities}
Bommasani, R., Hudson, D.~A., Adeli, E., Altman, R., Arora, S., von Arx, S.,
  Bernstein, M.~S., Bohg, J., Bosselut, A., Brunskill, E., et~al.
\newblock On the opportunities and risks of foundation models.
\newblock \emph{arXiv preprint arXiv:2108.07258}, 2021.

\bibitem[Borzunov et~al.(2022)Borzunov, Baranchuk, Dettmers, Ryabinin, Belkada,
  Chumachenko, Samygin, and Raffel]{borzunov2022petals}
Borzunov, A., Baranchuk, D., Dettmers, T., Ryabinin, M., Belkada, Y.,
  Chumachenko, A., Samygin, P., and Raffel, C.
\newblock Petals: Collaborative inference and fine-tuning of large models.
\newblock \emph{arXiv preprint arXiv:2209.01188}, 2022.

\bibitem[Brown et~al.(2020)Brown, Mann, Ryder, Subbiah, Kaplan, Dhariwal,
  Neelakantan, Shyam, Sastry, Askell, et~al.]{brown2020language}
Brown, T., Mann, B., Ryder, N., Subbiah, M., Kaplan, J.~D., Dhariwal, P.,
  Neelakantan, A., Shyam, P., Sastry, G., Askell, A., et~al.
\newblock Language models are few-shot learners.
\newblock \emph{Advances in neural information processing systems},
  33:\penalty0 1877--1901, 2020.

\bibitem[Chen et~al.(2021)Chen, Maniatis, Singh, Sutton, Dai, Lin, and
  Zhou]{chen2021spreadsheetcoder}
Chen, X., Maniatis, P., Singh, R., Sutton, C., Dai, H., Lin, M., and Zhou, D.
\newblock Spreadsheetcoder: Formula prediction from semi-structured context.
\newblock In \emph{International Conference on Machine Learning}, pp.\
  1661--1672. PMLR, 2021.

\bibitem[Chowdhery et~al.(2022)Chowdhery, Narang, Devlin, Bosma, Mishra,
  Roberts, Barham, Chung, Sutton, Gehrmann, et~al.]{chowdhery2022palm}
Chowdhery, A., Narang, S., Devlin, J., Bosma, M., Mishra, G., Roberts, A.,
  Barham, P., Chung, H.~W., Sutton, C., Gehrmann, S., et~al.
\newblock Palm: Scaling language modeling with pathways.
\newblock \emph{arXiv preprint arXiv:2204.02311}, 2022.

\bibitem[Demmel(2013)]{demmel2013communication}
Demmel, J.
\newblock Communication-avoiding algorithms for linear algebra and beyond.
\newblock In \emph{2013 IEEE 27th International Symposium on Parallel and
  Distributed Processing}, pp.\  585--585. IEEE, 2013.

\bibitem[Dettmers \& Zettlemoyer(2022)Dettmers and
  Zettlemoyer]{dettmers2022case}
Dettmers, T. and Zettlemoyer, L.
\newblock The case for 4-bit precision: k-bit inference scaling laws.
\newblock \emph{arXiv preprint arXiv:2212.09720}, 2022.

\bibitem[Dettmers et~al.(2022)Dettmers, Lewis, Belkada, and
  Zettlemoyer]{dettmers2022gptint}
Dettmers, T., Lewis, M., Belkada, Y., and Zettlemoyer, L.
\newblock Llm.int8(): 8-bit matrix multiplication for transformers at scale.
\newblock In Oh, A.~H., Agarwal, A., Belgrave, D., and Cho, K. (eds.),
  \emph{Advances in Neural Information Processing Systems}, 2022.

\bibitem[Fang et~al.(2021)Fang, Yu, Zhao, and Zhou]{fang2021turbotransformers}
Fang, J., Yu, Y., Zhao, C., and Zhou, J.
\newblock Turbotransformers: an efficient gpu serving system for transformer
  models.
\newblock In \emph{Proceedings of the 26th ACM SIGPLAN Symposium on Principles
  and Practice of Parallel Programming}, pp.\  389--402, 2021.

\bibitem[Fedus et~al.(2022)Fedus, Zoph, and Shazeer]{fedus2022switch}
Fedus, W., Zoph, B., and Shazeer, N.
\newblock Switch transformers: Scaling to trillion parameter models with simple
  and efficient sparsity.
\newblock \emph{Journal of Machine Learning Research}, 23\penalty0
  (120):\penalty0 1--39, 2022.

\bibitem[Frantar \& Alistarh(2023)Frantar and Alistarh]{frantar2023massive}
Frantar, E. and Alistarh, D.
\newblock Massive language models can be accurately pruned in one-shot.
\newblock \emph{arXiv preprint arXiv:2301.00774}, 2023.

\bibitem[Frantar et~al.(2022)Frantar, Ashkboos, Hoefler, and
  Alistarh]{frantar2022gptq}
Frantar, E., Ashkboos, S., Hoefler, T., and Alistarh, D.
\newblock Gptq: Accurate post-training quantization for generative pre-trained
  transformers.
\newblock \emph{arXiv preprint arXiv:2210.17323}, 2022.

\bibitem[Hoefler et~al.(2021)Hoefler, Alistarh, Ben-Nun, Dryden, and
  Peste]{hoefler2021sparsity}
Hoefler, T., Alistarh, D., Ben-Nun, T., Dryden, N., and Peste, A.
\newblock Sparsity in deep learning: Pruning and growth for efficient inference
  and training in neural networks.
\newblock \emph{J. Mach. Learn. Res.}, 22\penalty0 (241):\penalty0 1--124,
  2021.

\bibitem[Huang et~al.(2020)Huang, Jin, and Li]{huang2020swapadvisor}
Huang, C.-C., Jin, G., and Li, J.
\newblock Swapadvisor: Pushing deep learning beyond the gpu memory limit via
  smart swapping.
\newblock In \emph{Proceedings of the Twenty-Fifth International Conference on
  Architectural Support for Programming Languages and Operating Systems}, pp.\
  1341--1355, 2020.

\bibitem[Huang et~al.(2019)Huang, Cheng, Bapna, Firat, Chen, Chen, Lee, Ngiam,
  Le, Wu, et~al.]{huang2019gpipe}
Huang, Y., Cheng, Y., Bapna, A., Firat, O., Chen, D., Chen, M., Lee, H., Ngiam,
  J., Le, Q.~V., Wu, Y., et~al.
\newblock Gpipe: Efficient training of giant neural networks using pipeline
  parallelism.
\newblock \emph{Advances in neural information processing systems}, 32, 2019.

\bibitem[HuggingFace(2022)]{huggingfaceAccelerate}
HuggingFace.
\newblock Hugging face accelerate.
\newblock \url{https://huggingface.co/docs/accelerate/index}, 2022.

\bibitem[Iyer et~al.(2022)Iyer, Lin, Pasunuru, Mihaylov, Simig, Yu, Shuster,
  Wang, Liu, Koura, et~al.]{iyer2022opt}
Iyer, S., Lin, X.~V., Pasunuru, R., Mihaylov, T., Simig, D., Yu, P., Shuster,
  K., Wang, T., Liu, Q., Koura, P.~S., et~al.
\newblock Opt-iml: Scaling language model instruction meta learning through the
  lens of generalization.
\newblock \emph{arXiv preprint arXiv:2212.12017}, 2022.

\bibitem[Jia-Wei \& Kung(1981)Jia-Wei and Kung]{jia1981complexity}
Jia-Wei, H. and Kung, H.-T.
\newblock I/o complexity: The red-blue pebble game.
\newblock In \emph{Proceedings of the thirteenth annual ACM symposium on Theory
  of computing}, pp.\  326--333, 1981.

\bibitem[Kwon et~al.(2022)Kwon, Kim, Bae, Yoo, Kim, Park, Kim, Ha, Sung, and
  Lee]{kwon2022alphatuning}
Kwon, S.~J., Kim, J., Bae, J., Yoo, K.~M., Kim, J.-H., Park, B., Kim, B., Ha,
  J.-W., Sung, N., and Lee, D.
\newblock Alphatuning: Quantization-aware parameter-efficient adaptation of
  large-scale pre-trained language models.
\newblock \emph{arXiv preprint arXiv:2210.03858}, 2022.

\bibitem[Li et~al.(2022)Li, Phanishayee, Murray, Tarnawski, and
  Kim]{li2022harmony}
Li, Y., Phanishayee, A., Murray, D., Tarnawski, J., and Kim, N.~S.
\newblock Harmony: Overcoming the hurdles of gpu memory capacity to train
  massive dnn models on commodity servers.
\newblock \emph{arXiv preprint arXiv:2202.01306}, 2022.

\bibitem[Liang et~al.(2022)Liang, Bommasani, Lee, Tsipras, Soylu, Yasunaga,
  Zhang, Narayanan, Wu, Kumar, et~al.]{liang2022holistic}
Liang, P., Bommasani, R., Lee, T., Tsipras, D., Soylu, D., Yasunaga, M., Zhang,
  Y., Narayanan, D., Wu, Y., Kumar, A., et~al.
\newblock Holistic evaluation of language models.
\newblock \emph{arXiv preprint arXiv:2211.09110}, 2022.

\bibitem[Merity et~al.(2016)Merity, Xiong, Bradbury, and
  Socher]{merity2016pointer}
Merity, S., Xiong, C., Bradbury, J., and Socher, R.
\newblock Pointer sentinel mixture models.
\newblock \emph{arXiv preprint arXiv:1609.07843}, 2016.

\bibitem[Morton(2008)]{pagecache-mangagement}
Morton, A.
\newblock Pagecachemangagement.
\newblock
  \url{https://code.google.com/archive/p/pagecache-mangagement/source/default/source},
  2008.

\bibitem[Narayan et~al.(2022)Narayan, Chami, Orr, and R{\'e}]{narayan2022can}
Narayan, A., Chami, I., Orr, L., and R{\'e}, C.
\newblock Can foundation models wrangle your data?
\newblock \emph{arXiv preprint arXiv:2205.09911}, 2022.

\bibitem[Narayan et~al.(2018)Narayan, Cohen, and Lapata]{xsum}
Narayan, S., Cohen, S.~B., and Lapata, M.
\newblock Don't give me the details, just the summary! topic-aware
  convolutional neural networks for extreme summarization.
\newblock \emph{arXiv preprint arXiv:1808.08745}, 2018.

\bibitem[Narayanan et~al.(2021)Narayanan, Shoeybi, Casper, LeGresley, Patwary,
  Korthikanti, Vainbrand, Kashinkunti, Bernauer, Catanzaro,
  et~al.]{narayanan2021efficient}
Narayanan, D., Shoeybi, M., Casper, J., LeGresley, P., Patwary, M.,
  Korthikanti, V., Vainbrand, D., Kashinkunti, P., Bernauer, J., Catanzaro, B.,
  et~al.
\newblock Efficient large-scale language model training on gpu clusters using
  megatron-lm.
\newblock In \emph{Proceedings of the International Conference for High
  Performance Computing, Networking, Storage and Analysis}, pp.\  1--15, 2021.

\bibitem[NVIDIA(2022)]{nvidiaft}
NVIDIA.
\newblock Fastertransformer.
\newblock \url{https://github.com/NVIDIA/FasterTransformer}, 2022.

\bibitem[Paperno et~al.(2016)Paperno, Kruszewski, Lazaridou, Pham, Bernardi,
  Pezzelle, Baroni, Boleda, and Fern{\'a}ndez]{paperno2016lambada}
Paperno, D., Kruszewski, G., Lazaridou, A., Pham, N.-Q., Bernardi, R.,
  Pezzelle, S., Baroni, M., Boleda, G., and Fern{\'a}ndez, R.
\newblock The lambada dataset: Word prediction requiring a broad discourse
  context.
\newblock In \emph{Proceedings of the 54th Annual Meeting of the Association
  for Computational Linguistics (Volume 1: Long Papers)}, pp.\  1525--1534,
  2016.

\bibitem[Park et~al.(2022)Park, Park, Kwon, Kim, Lee, and Lee]{park2022nuqmm}
Park, G., Park, B., Kwon, S.~J., Kim, B., Lee, Y., and Lee, D.
\newblock nuqmm: Quantized matmul for efficient inference of large-scale
  generative language models.
\newblock \emph{arXiv preprint arXiv:2206.09557}, 2022.

\bibitem[Paszke et~al.(2019)Paszke, Gross, Massa, Lerer, Bradbury, Chanan,
  Killeen, Lin, Gimelshein, Antiga, et~al.]{paszke2019pytorch}
Paszke, A., Gross, S., Massa, F., Lerer, A., Bradbury, J., Chanan, G., Killeen,
  T., Lin, Z., Gimelshein, N., Antiga, L., et~al.
\newblock Pytorch: An imperative style, high-performance deep learning library.
\newblock \emph{Advances in neural information processing systems}, 32, 2019.

\bibitem[Pope et~al.(2022)Pope, Douglas, Chowdhery, Devlin, Bradbury, Levskaya,
  Heek, Xiao, Agrawal, and Dean]{pope2022efficiently}
Pope, R., Douglas, S., Chowdhery, A., Devlin, J., Bradbury, J., Levskaya, A.,
  Heek, J., Xiao, K., Agrawal, S., and Dean, J.
\newblock Efficiently scaling transformer inference.
\newblock \emph{arXiv preprint arXiv:2211.05102}, 2022.

\bibitem[Rajbhandari et~al.(2021)Rajbhandari, Ruwase, Rasley, Smith, and
  He]{rajbhandari2021zero}
Rajbhandari, S., Ruwase, O., Rasley, J., Smith, S., and He, Y.
\newblock Zero-infinity: Breaking the gpu memory wall for extreme scale deep
  learning.
\newblock In \emph{Proceedings of the International Conference for High
  Performance Computing, Networking, Storage and Analysis}, pp.\  1--14, 2021.

\bibitem[Ren et~al.(2021)Ren, Rajbhandari, Aminabadi, Ruwase, Yang, Zhang, Li,
  and He]{ren2021zero}
Ren, J., Rajbhandari, S., Aminabadi, R.~Y., Ruwase, O., Yang, S., Zhang, M.,
  Li, D., and He, Y.
\newblock Zero-offload: Democratizing billion-scale model training.
\newblock In \emph{2021 USENIX Annual Technical Conference (USENIX ATC 21)},
  pp.\  551--564, 2021.

\bibitem[Ryabinin et~al.(2023)Ryabinin, Dettmers, Diskin, and
  Borzunov]{ryabinin2023swarm}
Ryabinin, M., Dettmers, T., Diskin, M., and Borzunov, A.
\newblock Swarm parallelism: Training large models can be surprisingly
  communication-efficient.
\newblock \emph{arXiv preprint arXiv:2301.11913}, 2023.

\bibitem[Scao et~al.(2022)Scao, Fan, Akiki, Pavlick, Ili{\'c}, Hesslow,
  Castagn{\'e}, Luccioni, Yvon, Gall{\'e}, et~al.]{scao2022bloom}
Scao, T.~L., Fan, A., Akiki, C., Pavlick, E., Ili{\'c}, S., Hesslow, D.,
  Castagn{\'e}, R., Luccioni, A.~S., Yvon, F., Gall{\'e}, M., et~al.
\newblock Bloom: A 176b-parameter open-access multilingual language model.
\newblock \emph{arXiv preprint arXiv:2211.05100}, 2022.

\bibitem[Shen et~al.(2020)Shen, Dong, Ye, Ma, Yao, Gholami, Mahoney, and
  Keutzer]{shen2020q}
Shen, S., Dong, Z., Ye, J., Ma, L., Yao, Z., Gholami, A., Mahoney, M.~W., and
  Keutzer, K.
\newblock Q-bert: Hessian based ultra low precision quantization of bert.
\newblock In \emph{Proceedings of the AAAI Conference on Artificial
  Intelligence}, volume~34, pp.\  8815--8821, 2020.

\bibitem[Steiner et~al.(2022)Steiner, Elhoushi, Kahn, and
  Hegarty]{steiner2022olla}
Steiner, B., Elhoushi, M., Kahn, J., and Hegarty, J.
\newblock Olla: Optimizing the lifetime and location of arrays to reduce the
  memory usage of neural networks.
\newblock 2022.
\newblock \doi{10.48550/arXiv.2210.12924}.

\bibitem[Wang et~al.(2018)Wang, Ye, Zhao, Wu, Li, Song, Xu, and
  Kraska]{wang2018superneurons}
Wang, L., Ye, J., Zhao, Y., Wu, W., Li, A., Song, S.~L., Xu, Z., and Kraska, T.
\newblock Superneurons: Dynamic gpu memory management for training deep neural
  networks.
\newblock In \emph{Proceedings of the 23rd ACM SIGPLAN symposium on principles
  and practice of parallel programming}, pp.\  41--53, 2018.

\bibitem[Wang et~al.(2021)Wang, Xiong, Wei, Wang, and Li]{wang2021lightseq}
Wang, X., Xiong, Y., Wei, Y., Wang, M., and Li, L.
\newblock Lightseq: A high performance inference library for transformers.
\newblock In \emph{Proceedings of the 2021 Conference of the North American
  Chapter of the Association for Computational Linguistics: Human Language
  Technologies: Industry Papers}, pp.\  113--120, 2021.

\bibitem[Xiao et~al.(2022)Xiao, Lin, Seznec, Demouth, and
  Han]{xiao2022smoothquant}
Xiao, G., Lin, J., Seznec, M., Demouth, J., and Han, S.
\newblock Smoothquant: Accurate and efficient post-training quantization for
  large language models.
\newblock \emph{arXiv preprint arXiv:2211.10438}, 2022.

\bibitem[Yao et~al.(2022)Yao, Aminabadi, Zhang, Wu, Li, and
  He]{yao2022zeroquant}
Yao, Z., Aminabadi, R.~Y., Zhang, M., Wu, X., Li, C., and He, Y.
\newblock Zeroquant: Efficient and affordable post-training quantization for
  large-scale transformers.
\newblock In Oh, A.~H., Agarwal, A., Belgrave, D., and Cho, K. (eds.),
  \emph{Advances in Neural Information Processing Systems}, 2022.

\bibitem[Yu et~al.(2022)Yu, Jeong, Kim, Kim, and Chun]{yu2022orca}
Yu, G.-I., Jeong, J.~S., Kim, G.-W., Kim, S., and Chun, B.-G.
\newblock Orca: A distributed serving system for $\{$Transformer-Based$\}$
  generative models.
\newblock In \emph{16th USENIX Symposium on Operating Systems Design and
  Implementation (OSDI 22)}, pp.\  521--538, 2022.

\bibitem[Zhang et~al.(2022)Zhang, Roller, Goyal, Artetxe, Chen, Chen, Dewan,
  Diab, Li, Lin, et~al.]{zhang2022opt}
Zhang, S., Roller, S., Goyal, N., Artetxe, M., Chen, M., Chen, S., Dewan, C.,
  Diab, M., Li, X., Lin, X.~V., et~al.
\newblock Opt: Open pre-trained transformer language models.
\newblock \emph{arXiv preprint arXiv:2205.01068}, 2022.

\bibitem[Zheng et~al.(2022)Zheng, Li, Zhang, Zhuang, Chen, Huang, Wang, Xu,
  Zhuo, Gonzalez, et~al.]{zheng2022alpa}
Zheng, L., Li, Z., Zhang, H., Zhuang, Y., Chen, Z., Huang, Y., Wang, Y., Xu,
  Y., Zhuo, D., Gonzalez, J.~E., et~al.
\newblock Alpa: Automating inter-and intra-operator parallelism for distributed
  deep learning.
\newblock In \emph{16th USENIX Symposium on Operating Systems Design and
  Implementation (OSDI 22)}, 2022.

\end{thebibliography}
\bibliographystyle{icml2023}

\clearpage
\appendix
\section{Appendix}
\label{sec:appendix}

\subsection{Notations}
\label{subsec:appendix-notation}
We use notations in \Cref{tab:model_vars} in this appendix.

\begin{table}[ht]
\centering
\begin{tabular}{|c|p{55mm}|}
  \hline
  Var  & Meaning \\ \hline
  $l$ & number of layers in the model\\ \hline
  $s$ & prompt sequence length\\ \hline
  $n$ & output sequence length\\ \hline
  $bls$ & block size\\ \hline
  $h_1$ & hidden size\\ \hline
  $h_2$ & hidden size of the second MLP layer\\ \hline
  $nh$ & number of head in the model\\ \hline
\end{tabular}
\caption{Notations}
\label{tab:model_vars}
\end{table}

\subsection{Compute Schedule Optimality}
\label{subsec:optimality}

This subsection discusses the graph traversal problem described in \Cref{subsec:formulation} and only considers the case that the model cannot fit in a single GPU.
We assume no application of CPU computation. To compute a square, the GPU loads the tensors it needs and offloads the cache and activations when finished.
We will analyze two schedules: the zig-zag block schedule used in \cref{subsec:search-space} and an I/O-optimal diagonal block schedule introduced in this section.
Note that our analysis only considers the theoretical I/O complexity.
In the real system, the latency and memory consumption cannot be the same as in the theoretical calculations.

There are three things that need to be stored during the generation process: weights, activations, and the KV cache.
From the computational graph, we have three observations.
(1) Suppose we need to swap the weights in and out of the GPU. Whatever the portion is, to finish the generation for one prompt, we need to swap $n$ times for $n$ tokens.
Therefore, it would be preferable to reuse the loaded weights for a batch of prompts, amortizing the weights I/O time.
(2) Each square will output activations which will be fed into the next layer.
Each row in the computational graph only needs to hold activations for one square at the same time.
(3) For each square besides the last $l$ squares in a row, the KV cache dumped by the square cannot be released until generating the last token (the last $l$ columns in the computational graph).
It is not shared across rows or columns, which will be the major factor in limiting the batch size.

\subsubsection{Zig-zag block schedule and Diagonal block schedule}

\textbf{Zig-zag block schedule.}
Inspired by the three observations introduced in 
\Cref{subsec:search-space},
we compute the first column in the computational graph for $bls$ samples,
save the dumped caches and activations,
then compute the second column for $bls$ samples, until the last column for $bls$ samples.
We call $bls$ as the block size as introduced in \Cref{subsec:search-space}. The computed $bls \cdot n \cdot l$ squares are called a block.

Assume \texttt{FP16} precision, to generate $n\cdot bls$ tokens during one block computation, we have to load $n$ times the whole model weights, do I/O operations on activations with $2(2h_1\cdot s \cdot bls\cdot l + 2h_1 \cdot bls \cdot l \cdot (n-1))$ bytes in total,
and do I/O on the KV cache with
$4h_1\cdot bls\cdot l \cdot (s \cdot n + n (n - 1) / 2)$ bytes in total.

Let $w$ denote the size of one-layer weights.
The peak memory used to store the weights, activations, and KV caches can be estimated as
\[ \text{peak\_mem} = w + 2h_1\cdot bls + 4h_1\cdot bls\cdot l\cdot (s+n) \]

If we only swap with CPU, then there is the constraint that peak\_mem $<$ CPU memory - some overhead.
Let $cmem$ denote the right hand, there is
\[ bls \leq \frac{cmem - w}{2h_1 + 4h_1\cdot l\cdot (s+n)} = bls_1 \]

Now we show that there is a better schedule that gives the same I/O efficiency but can enlarge the $bls$ by around 2 in some cases.

\paragraph{Diagonal block schedule}

\begin{figure}[t]
    \centering
    \includegraphics[width=\mywidth]{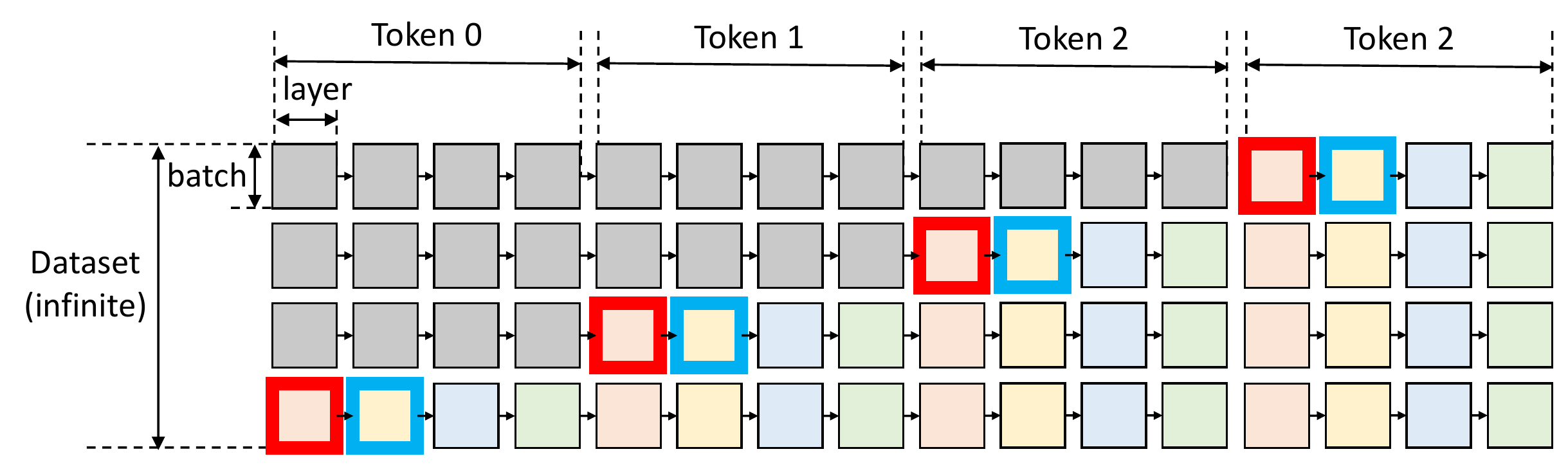}
    \caption{diagonal block schedule}
    \label{fig:diagonal-block}
\end{figure}

\Cref{fig:diagonal-block} is an illustration of our diagonal block schedule.
We have a block containing 4 GPU batches, and we are going to generate 4 tokens with a model that has 4 layers.
There will be a one-time warm-up phase (gray area) to compute the area above the diagonal.
Then for each iteration, the system will compute a diagonal that contains 4 sub-diagonals (4 squares enclosed by red outlines as the first sub-diagonal, then 4 squares enclosed by blue outlines as the second sub-diagonal).
After finishing the 4 sub-diagonals, it will repeat the same computation in the next row.

For simplicity, consider the good case that the memory capacity is large enough that the diagonal can cover all $n$ generation iterations for $n$ tokens.
The block size $bls$ now is defined as the number of samples touched by the diagonal.

In total, to compute one diagonal, the weights of each layer will be loaded once, and the I/O of the activations and KV cache will be in size roughly as $1/n$ as the value in the zig-zag block schedule.
There will be $bls$ tokens generated.
So the I/O per token is the same with the zig-zag block schedule after the one-time warm-up if for the same $bls$.

The peak memory needed to hold the necessary weights, activations, and KV cache is estimated as

\begin{align*}
    \text{peak\_mem} &= w + 2h_1\cdot bls\\
    &+ \frac{4h_1\cdot bls\cdot l (2s+n)(n-1)}{2n}
\end{align*}
from $\text{peak\_mem}\leq cmem$, we have
\[ bls\leq \frac{n(cmem - w)}{2h_1\cdot n +  2h_1\cdot l \cdot (2s+n)(n-1)} = bls_2 \]

Despite a one-time warm-up at the beginning.
The diagonal block schedule can accommodate a larger block size than zig-zag block schedule at the ratio of
\[ \frac{bls_2}{bls_1} = \frac{2s+2n}{2s+n} + O\left(\frac{1}{n}\right) \]
which is close to 2 when $n\gg s$, and close to 1 when $s\gg n$.

A larger $bls$ does not change the activations and KV caches I/O per token, but can reduce the weights I/O per token proportionally,
while weights I/O can normally occupy a large portion.

\textbf{Discussions.}
In offloading setting, I/O is a significant bottleneck in latency and throughput, so the diagonal block schedule should be able to give considerable gain when $n$ is relatively large compared to $s$ and the memory is sufficiently large to fit $n$ samples.

When the compute resources are sufficient to avoid offloading, the diagonal block schedule can still help to reduce the peak memory and enlarge the batch size, which increases GPU utilization.

Another benefit compared to the zig-zag block schedule is that with the same throughput, the generation latency for each prompt is reduced.
For example, suppose in the zig-zag block schedule the $bls$ samples finish the generation at the same time with latency $T$.
In the diagonal block schedule, the first $bls/n$ samples finish the generation with latency $T/n$, the second $bls/n$ samples finish with latency $2T/n$, and so on.
The average latency of completion is reduced by half.

Despite its advantages, there are some difficulties in implementing the diagonal block schedule. The major implementation difficulty is the dynamic update of the KV cache buffer. To improve runtime efficiency, \sys now pre-allocates continuous buffers for all KV cache at the beginning of a block. This works well for the zig-zag block schedule. However, for the diagonal block schedule, pre-allocating continuous buffers make it impossible to save memory anymore. To utilize the memory-saving property of the diagonal block schedule, one needs to implement efficient attention computation on non-contiguous memory.

\subsubsection{Proof of \Cref{thm:suboptimal}}

Note that in any case when we move from computing a square to another square, we need to offload and load the corresponding KV cache. So that the total I/O incurred by KV cache is constant.
The total I/O incurred by activations could vary, but despite the prefill phase, its size for each square is much smaller than the KV cache for the same square. In total, the size of activations is around $1/(2s+n)$ of the size of KV cache.
We will ignore the I/O incurred by activations for simplicity, which can cause a multiplicative error of $1/(2s+n)$ at most.
Then the only thing left is the weights I/O.
Starting from now, the I/O complexity in the context refers to the I/O complexity incurred by weights.

\begin{definition}
We define the working state at any time when the GPU is computing a square as follows.
Suppose there are $k$ GPU batches working in progress.
The column indices of the last squares that have been computed (including the current one) are $a_1, a_2, ..., a_k$, and $1 \le a_i \le n \times l$.
Different batches are identically independent, so w.l.o.g., suppose $a_1\geq a_2\geq ...\geq a_k$.
Then the working state is a tuple $(a_1, a_2,..., a_k)$.
A move that does a computation on a square is a pair of states $s^{(1)}, s^{(2)}$ that means transit from state $s^{(1)}$ to $s^{(2)}$.
\end{definition}

Consider an optimal order denoted as an infinite sequence $m_1, m_2, ...., m_{\infty}$, where $m_i$ is the $i$th move.
For each $i$, let $s_i$ be the current working state.

\begin{lemma}
\label{lem:uniform-delta}
If there is a list of moves that start from state $s$, and back to state $s$ at the end,
the number of computed squares for every column (one layer for one token) is the same.
\end{lemma}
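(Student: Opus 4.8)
\textbf{Proof plan for Lemma~\ref{lem:uniform-delta}.}
The plan is to argue by a conservation/telescoping argument on a suitable potential function attached to the working state. First I would introduce, for a working state $(a_1,\dots,a_k)$, the multiset of column indices currently "in progress" and track how a single move changes it. A move that computes a square advances exactly one batch $a_i$ by one column (from $a_i$ to $a_i+1$, with the convention that reaching $n\times l$ and starting a fresh prompt wraps around to column $1$ of a new sample in that batch slot), while leaving all other $a_j$ untouched; crucially, the constraint from \Cref{subsec:formulation} (a square can only be computed if all squares to its left in the same row are done) forces each batch slot to progress through columns $1,2,\dots,n\times l$ strictly in order. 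So over any finite list of moves, for each batch slot the number of squares computed in a given column $c$ equals the number of times that slot's pointer "crossed" column $c$.

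Next I would formalize "returns to state $s$ at the end." Since the state records $(a_1,\dots,a_k)$ sorted in decreasing order, and batches are interchangeable, returning to $s$ means that after relabeling, every batch slot's column pointer is back where it started. Because each slot's pointer only ever increments (cyclically through the $n\times l$ columns, moving to a new prompt when it wraps), the net displacement of each slot's pointer is a nonnegative multiple of $n\times l$ — say slot $i$ makes $r_i$ full cycles. The key observation is then: during $r_i$ full cycles, slot $i$ computes exactly $r_i$ squares in \emph{each} column $c\in\{1,\dots,n\times l\}$, uniformly. Summing over all slots, the total number of computed squares in column $c$ is $\sum_i r_i$, which is manifestly independent of $c$. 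That is the claim.

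The main step to get right carefully is the wrap-around bookkeeping: I need to make sure that "a column" in the lemma statement (one layer for one token) is matched to the right granularity, and that a pointer wrapping from the last column back to the first corresponds to a legitimate move in the graph (finishing one prompt's generation and beginning the next prompt in that batch slot, which is allowed since the prompt supply is infinite). I expect this is where an informal argument could slip — one must verify that the "same state $s$" hypothesis genuinely pins down each slot up to an integer number of complete traversals, and that partial traversals are impossible without changing the state. Once that is nailed down, the counting is immediate from the monotone (cyclic) progression of each batch slot, and no further estimates are needed.

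I would close by noting this lemma is the combinatorial heart of the optimality argument: it says any cyclic schedule has a well-defined "throughput profile" that is flat across columns, so comparing the zig-zag block schedule to the optimal one reduces to comparing their per-column weight-I/O cost, which is what the proof of \Cref{thm:suboptimal} then does.
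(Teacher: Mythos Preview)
Your cyclic-slot approach is appealing but has a genuine gap at the step where you conclude that ``the net displacement of each slot's pointer is a nonnegative multiple of $n\times l$.'' This does not follow from the hypothesis that the state (a sorted tuple, hence a multiset) returns to $s$. The relabeling you invoke only matches the \emph{multiset} of end positions to the multiset of start positions; it does not force each physical slot to return to its own starting column. Concretely, take $n\cdot l=4$ and $s=(3,1)$. Advance the batch at $1$ to $2$, then to $3$; advance the other batch from $3$ to $4$ (finishing it); open a fresh batch at column $1$. The state is back at $(3,1)$, yet the slot that began at $3$ is now at $1$ and the slot that began at $1$ is now at $3$ --- each made exactly half a cycle, not an integer number. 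So the assertion you flag as needing care (``partial traversals are impossible without changing the state'') is in fact false. The column counts \emph{are} uniform here (each of $1,2,3,4$ is hit once), but not by the mechanism you describe.

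The gap is repairable: let $\sigma$ be the permutation such that the slot starting at $a_i$ ends at $a_{\sigma(i)}$; then concatenating the trajectories along each cycle of $\sigma$ does produce an integer number of full passes through $\{1,\dots,n\cdot l\}$, and summing over cycles gives uniformity. You would need to make this permutation-cycle step explicit. A smaller issue is that your fixed-$k$-slot model with wrap-around does not directly accommodate move sequences that open additional batches (temporarily raising the in-progress count above $k$) before closing them; those extras are full rows and contribute uniformly, but this must be separated out first.

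The paper avoids both difficulties by a different route: it strips off all fully traversed rows, then pairs each start-state index $a_i$ with an end-state index $b_i$ and records $+1$ on $(a_i,b_i]$ when $b_i>a_i$ and $-1$ on $(b_i,a_i]$ when $b_i<a_i$. Because $\{a_i\}$ and $\{b_i\}$ are the same multiset, the number of indices below any fixed column $j$ agrees in both lists, which forces these signed contributions to cancel at every $j$. No slot trajectories or permutation decomposition are needed.
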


\begin{proof}
Suppose the start state $s=(a_1, a_2,..., a_k)$.
For computations that occupy the whole row, the number of computed squares for every column is the same.
So we only need to consider the rows that have not been fully traversed (captured by the end state).
For each $a_i$, if the underlying row has not been finished at the end, and ends with the index $b_i$, then we pair $a_i$ with $b_i$.
If the underlying row has been finished, we pair it with a newly opened but not finished row, still, let $b_i$ denote the new index.

Thus we have transited from state $S_a = (a_1, a_2,..., a_k)$ to another state $S_b=(b_1, b_2,..., b_k)$. The indices in $S_a$ are sorted by $a_1\geq a_2\geq ...\geq a_k$. The indices in $S_b$ are not sorted, but $b_i$ is paired to $a_i$ according to the above paragraph.
For each $i$, if $b_i > a_i$, we need to count the squares in $(a_i, b_i]$ by 1.
If $b_i < a_i$, we need to count the squares in $(b_i, a_i]$ by -1.
Now we argue that for each column index $j$ and $1 \le j \le n \times l$, the count over it is summed to 0.
Suppose not, that there are $p$ positive count and $q$ negative count and $p\neq q$.
Then there are $p$ values lower than $j$ in state $a$ and $q$ values lower than $j$ in state $b$.
This contradicts the fact that $S_a$ and $S_b$ are the same state with different orders.
Therefore, the number of computed squares for every column is the same.
\end{proof}

\begin{theorem}
\label{thm:optimal}
The diagonal block schedule is I/O-optimal asymptotically.
\end{theorem}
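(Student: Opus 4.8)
The plan is to reduce the claim to a lower bound on \emph{weights} I/O and then check that the diagonal block schedule meets it up to an additive $O(1)$ term. By the same observations that open the proof of \Cref{thm:suboptimal}, the KV-cache I/O of any valid schedule is a fixed constant (each transition between squares forces one offload plus one reload of a KV block) and the activation I/O is smaller than the KV-cache I/O by a factor $\Theta(1/(2s+n))$; both are lower-order, so it suffices to compare schedules by the number of weight loads per generated token. The diagonal schedule loads each of the $l$ layers exactly once per diagonal, emits $bls$ tokens per diagonal with $bls$ allowed up to $bls_2$ from \Cref{subsec:optimality}, and pays the triangular warm-up only once over the infinite prompt stream, so its weights I/O per token tends to $lw/bls_2$. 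The whole task is therefore to show no valid schedule beats $(1-o(1))\,lw/bls_2$.

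To get the lower bound I would build a \emph{steady-state window}. The memory capacity makes the set of reachable working states finite, so any infinite valid schedule revisits some state infinitely often; fix a window between two such revisits. By \Cref{lem:uniform-delta} every column is advanced by the same amount $c$ in the window, so the window does exactly $c$ full prompts' worth of work: $cn$ tokens and $cnl$ layer computations, with each prompt passing through all $n$ generation steps. I would then split the window into ``step-blocks'' (processing one generation step for a batch of $B$ currently-in-flight prompts across all $l$ layers) and time-average two quantities: the number $\rho$ of layers held resident on the GPU and the in-flight batch $B$. Holding $\rho$ layers, the other $l-\rho$ must be reloaded, and the left-to-right row constraint (together with the model not fitting on the GPU) means each of them is reloaded at least once per step-block; hence the window contains at least $(l-\rho)\,cn/B$ weight loads, i.e.\ at least $(l-\rho)w/B$ weights I/O per token. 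Meanwhile the peak memory constraint, applied inside the window and combined with a conservation (Little's-law-style) identity forced by \Cref{lem:uniform-delta} — each in-flight prompt visiting every generation step once, carrying KV cache for all $l$ layers — gives $cmem \gtrsim \rho w + 4h_1 l\, B\,(s+n/2)$, the same quantity that defines $bls_2$.

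Finally I would minimize $(l-\rho)w/B$ over $\rho$ and $B$ subject to $cmem \gtrsim \rho w + 4h_1 l\,B\,(s+n/2)$. Solving the constraint for $B$ and substituting, the objective is increasing in $\rho$ precisely because the full model $lw$ exceeds $cmem$; so the optimum is $\rho=1$ and $B=bls_2$, yielding a lower bound of $(l-1)w/bls_2 = (1-o(1))\,lw/bls_2$, which matches the diagonal schedule and proves asymptotic optimality. (Along the way this also explains why the block-structured schedules that keep only one layer resident cannot be improved by caching several layers: the memory a resident layer consumes shrinks the batch more than it saves on reloads.)

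The step I expect to be the main obstacle is making the lower bound robust to transients. Nothing in the single-moment memory inequality prevents a schedule from briefly spiking its in-flight batch (or holding extra resident layers) at a short-lived memory peak to obtain a cheap burst of computation, so the argument must show, using finiteness of the state space together with \Cref{lem:uniform-delta} and an amortized/potential accounting of the KV-cache ``memory debt'' a large in-flight batch incurs over its lifetime, that such spikes cannot push the long-run weights-I/O-per-token below $lw/bls_2$. Secondary technical points are the prefill phase, where activation I/O is not negligible and must be bounded separately, and the sub-case where the memory budget is too small for one diagonal to span all $n$ generation steps, in which the block size $bls_2$ in the bound should be replaced by its memory-limited analogue.
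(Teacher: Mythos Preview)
Your high-level scaffolding matches the paper's: finiteness of the state space, a recurring state defining a steady-state window, and \Cref{lem:uniform-delta} forcing every column to be advanced the same number $c$ of times inside that window. The divergence is in how the weights-I/O lower bound is extracted, and the paper's route is both different and substantially simpler.

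The paper does \emph{not} decompose the window into step-blocks or time-average quantities like your $\rho$ and $B$. Instead it uses a per-square charging argument. Assign to each square a ``memory cost'' equal to the KV cache its row carries at that token position (proportional to $s+i-1$ for a square in the range of the $i$-th token); let $M$ be the sum over all squares in the window. Now fix any single load of one layer's weights. The left-to-right row constraint means two same-color squares in the same row are separated by $l-1$ other-color squares, so one load can compute at most one square per row; and since none of those rows can have finished, all of their KV caches must coexist in storage at that moment. Hence the total memory cost of the squares covered by any one load is at most the capacity $M'$. Summing over loads gives that the number of weight loads is at least $\lceil M/M'\rceil$. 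The diagonal schedule meets this exactly because every one of its loads fills memory to $M'$ (up to granularity), whereas the zig-zag schedule's loads at early tokens fall short of $M'$.

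This completely avoids the obstacle you flag. There is no transient issue because the bound is a count over squares, not an inequality at a time instant: even a schedule that briefly spikes its in-flight batch still has each load's covered squares bounded by $M'$ worth of memory cost, and the grand total $M$ is fixed. Your step-block decomposition is also shaky independently of transients, since an arbitrary valid path need not advance prompts in lockstep through all $l$ layers one token at a time, so the objects you want to average are not well-defined without more work. The per-square charging argument needs no such structural assumption, no Little's-law identity, and no potential function; it is a two-line pigeonhole once the right cost is attached to each square. I would recommend abandoning the rate-based optimization over $(\rho,B)$ in favor of this covering argument.
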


\begin{proof}
Notice that since the memory capacity is finite, the length of the state is finite, thus the number of the possible state is finite.
If each state appears finite times in the sequence, then the sequence cannot be infinite.
Therefore, there exists a state $s$ that appears in the sequence infinite times.

Let $j_1, j_2,..., j_{\infty}$ be the indices in the sequence that have state $s$.
The moves between each two neighboring $s$ states correspond to a throughput.
The moves between $j_1$ and $j_2$ should create the highest possible throughput that pushes from state $s$ to $s$.
Otherwise, we can replace it to get a higher total throughput, which contradicts to that it is an optimal order.
So that we can repeat such a strategy between each neighboring $j_i, j_{i+1}$ to get an optimal compute order.

Now the problem is reduced to finding an optimal compute order between $j_1$ and $j_2$.
With infinite loops, the highest throughput from $j_1$ to $j_2$ gives the highest throughput among the whole sequence.

Assume an optimal compute order between $j_1$ and $j_2$. From \Cref{lem:uniform-delta}, there is the same number of squares to be computed for every column denoted as $c$.
With such fixed $c$, the throughput is determined by the I/O time between $j_1$ and $j_2$.
The number of times we load weights for each color in \Cref{fig:computational_graph} determines the total I/O time.
Each time we load weights, for example, the weights for computing the yellow squares, we cannot compute two yellow squares in the same row without other weights swaps, because the squares between them have not been computed and require other weights.

Therefore, for one load, we can only compute squares from different rows, which means all the caches and activations corresponding to those squares need to be held (either on the CPU or on the disk).
Every square corresponds to some memory consumption, for example, the squares in the range of the $i$-th token cost caches for $s+i-1$ tokens.
The sum of the memory consumption of all squares is a constant denoted as $M$.
Let $M'$ denote the memory capacity. The number of weights loading times is at least $\lceil M/M' \rceil$.
Let $t_w$ denote the I/O time for loading weights for one color, the optimal throughput is at most $c/\lceil M/M' \rceil/t_w$.

In the diagonal block schedule, after warm-up, each time with the loaded weights, the peak memory is the sum of the memory consumption of each computed square, which is the same each time we load weights.
We can set it to hit $M'$\footnote{The size value is discrete, we cannot exactly hit $M'$, but with large enough parameters, such a gap could be set to only affect the total value by less than $1\%$. For example, the layer could be at the tensor level to make squares extremely fine-grained.}.
Take $c$ number of diagonals as the repeated list of moves denoted as $\vec{q}$. Set the starting state to be $s$ mentioned before, $\vec{q}$ will restore the state to $s$ by construction.
The number of weights loading times during $\vec{q}$ is $\lceil M/M' \rceil$, which meets the lower bound, and achieves the throughput upper bound $c/\lceil M/M' \rceil/t_w$.
The warm-up phase can be ignored in the setting of an infinite sequence.
In summary, the diagonal block schedule is I/O optimal asymptotically.
\end{proof}

The zig-zag block schedule is not optimal, as the peak memory consumption is not the same each time loading the weights. When computing the layers for the last token, the peak memory is scaled with $s+n-1$, while for the first token, it is scaled with $s$.
In order to let the former fit in $M'$, the latter must be smaller than $M'$.
But the memory consumption change is linear when generating the tokens, thus the average memory consumption for each weights loading can be pushed to at least $M'/2$.
From this, the zig-zag block schedule can achieve the throughput at least $c/\lceil M/(M'/2)\rceil /t_w$ which is $1/2$ of the throughput upper bound.
In the infinite sequence setting, this means the zig-zag block schedule can achieve an I/O complexity that is at most 2$\times$ optimal.
Therefore, we have:

\suboptimal*

\subsection{Cost Model}
\label{sec:full-cost-model}

In this section, we present the full cost model.
Note that we use a single variable to represent constants like bandwidth and TFLOPS to simplify the formulation below.
In real systems, these constants vary according to the total load. 
We handle such dynamics by using piece-wise functions and adding regularization terms.
We carefully model the dynamics by depending only on other constants (e.g., hidden size), so the optimization problem remains a linear programming problem with respect to policy variables.

\Cref{tab:model_vars} and \Cref{tab:cost_vars} give the meaning of constants used in the cost model.

\begin{table}[ht]
\centering
\begin{tabular}{|c|p{55mm}|}
  \hline
  Var  & Meaning \\ \hline
  $ctog\_bdw$ & CPU to GPU bandwidth\\ \hline
  $gtoc\_bdw$ & GPU to CPU bandwidth\\ \hline
  $dtoc\_bdw$ & disk to CPU bandwidth\\ \hline
  $ctod\_bdw$ & CPU to disk bandwidth\\ \hline
  $mm\_flops$ & GPU flops per second for matrix multiplication\\ \hline
  $bmm\_flops$ & GPU flops per second for batched matrix multiplication\\ \hline
  $cpu\_flops$ & CPU flops per second\\ \hline
  $wg$ & percentage of weights on GPU\\ \hline
  $wc$ & percentage of weights on CPU\\ \hline
  $wd$ & percentage of weights on disk\\ \hline
  $cg$ & percentage of KV cache on GPU\\ \hline
  $cc$ & percentage of KV cache on CPU\\ \hline
  $cd$ & percentage of KV cache on disk\\ \hline
  $hg$ & percentage of activations on GPU\\ \hline
  $hc$ & percentage of activations on CPU\\ \hline
  $hd$ & percentage of activations on disk\\ \hline
  
\end{tabular}
\caption{Notation Variables}
\label{tab:cost_vars}
\end{table}

The object is to maximize throughput (token/s), which is equivalent to minimizing the reciprocal (s/token).
Free variables are colored blue.

\paragraph{Objective}
\[ \text{Minimize\space\space\space} T/\textcolor{blue}{bls} \]
Then the following constraints describe the calculation of total latency:
\[ T = Tpre\cdot l + Tgen\cdot (n-1)\cdot l \]
\[ Tpre = \max(ctog^p, gtoc^p, dtoc^p, ctod^p, comp^p) \]
\begin{align*}
    ctog^p = &\text{ } \frac{weights\_ctog^p + act\_ctog^p}{ctog\_bdw}\\
    = &\text{ } \frac{1}{ctog\_bdw} ((\textcolor{blue}{wc}+\textcolor{blue}{wd})(8h_1^2 + 4h_1\cdot h_2)\\
      &\text{\space\space\space\space\space\space\space\space\space\space\space\space\space\space\space\space\space\space} + 2(\textcolor{blue}{hc}+\textcolor{blue}{hd})s\cdot h_1\cdot \textcolor{blue}{bls})
\end{align*}
\begin{align*}
    gtoc^p = &\text{ } \frac{cache\_gtoc^p + act\_gtoc^p}{gtoc\_bdw}\\
    = &\text{ } \frac{1}{gtoc\_bdw} (4(\textcolor{blue}{cc}+\textcolor{blue}{cd})(s+1)h_1\cdot \textcolor{blue}{bls}\\  &\text{\space\space\space\space\space\space\space\space\space\space\space\space\space\space\space\space\space\space}+2(\textcolor{blue}{hc}+\textcolor{blue}{hd})s\cdot h_1\cdot \textcolor{blue}{bls})
\end{align*}
\begin{align*}
    dtoc^p = &\text{ } \frac{weights\_dtoc^p + act\_dtoc^p}{dtoc\_bdw}\\
    = &\text{ } \frac{1}{dtoc\_bdw} (\textcolor{blue}{wd}(8h_1^2 + 4h_1\cdot h_2)\\
      &\quad\quad\quad\quad\text{\space\space} + 2\textcolor{blue}{hd}\cdot s\cdot h_1\cdot \textcolor{blue}{bls})
\end{align*}
\begin{align*}
    ctod^p = &\text{ } \frac{cache\_ctod^p + act\_ctod^p}{ctod\_bdw}\\
    = &\text{ } \frac{1}{ctod\_bdw} (4\textcolor{blue}{cd}\cdot \textcolor{blue}{bls}\cdot (s+1)\cdot h_1\\
      &\quad\quad\quad\quad\text{\space\space} + 2\textcolor{blue}{hd}\cdot s\cdot h_1\cdot \textcolor{blue}{bls})
\end{align*}
\begin{align*}
    comp^p = &\text{ } \frac{linear\_layer^p}{mm\_flops} + \frac{att^p}{bmm\_flops}\\
    =&\text{ } \frac{\textcolor{blue}{bls}(8s\cdot h_1^2 + 4s\cdot h_1\cdot h_2)}{mm\_flops}\\
    &\text{ }+ \frac{4\textcolor{blue}{bls}\cdot s^2\cdot h_1}{bmm\_flops}
\end{align*}
\[ Tgen = \max(ctog^g, gtoc^g, dtoc^g, ctod^g, comp^g) \]
\begin{align*}
    ctog^g = &\text{ } \frac{weights\_ctog^g + act\_ctog^g}{ctog\_bdw}\\
    = &\text{ } \frac{1}{ctog\_bdw} ((\textcolor{blue}{wc}+\textcolor{blue}{wd})(8h_1^2 + 4h_1\cdot h_2)\\
      &\quad\quad\quad\quad\text{\space\space} + 2(\textcolor{blue}{hc}+\textcolor{blue}{hd})h_1\cdot \textcolor{blue}{bls})
\end{align*}
\begin{align*}
    gtoc^g = &\text{ } \frac{act\_gtoc^g}{gtoc\_bdw}\\
    = &\text{ } \frac{1}{gtoc\_bdw} (2(\textcolor{blue}{hc}+\textcolor{blue}{hd})\cdot h_1\cdot \textcolor{blue}{bls})
\end{align*}
\begin{align*}
    dtoc^g = &\text{ } \frac{cache\_dtoc^g +weights\_dtoc^g + act\_dtoc^g}{dtoc\_bdw}\\
    = &\text{ } \frac{1}{dtoc\_bdw} (4\textcolor{blue}{cd}\cdot \textcolor{blue}{bls}\cdot (s+n/2)\cdot h_1\\
      &\quad\quad\quad\quad\text{\space\space} +\textcolor{blue}{wd}(8h_1^2 + 4h_1\cdot h_2)\\
      &\quad\quad\quad\quad\text{\space\space} + 2\textcolor{blue}{hd}\cdot h_1\cdot \textcolor{blue}{bls})
\end{align*}
\begin{align*}
    ctod^g = &\text{ } \frac{cache\_ctod^g + act\_ctod^g}{ctod\_bdw}\\
    = &\text{ } \frac{1}{ctod\_bdw} (4\textcolor{blue}{cd}\cdot \textcolor{blue}{bls}\cdot h_1 + 2\textcolor{blue}{hd}\cdot h_1\cdot \textcolor{blue}{bls})
\end{align*}
\[ comp^g = gpu\_comp^g + cpu\_comp^g \]
\begin{align*}
    gpu\_comp^g = &\text{ } \frac{linear\_layer^g}{mm\_flops} + \frac{att^g}{bmm\_flops}\\
    =&\text{ } \frac{\textcolor{blue}{bls}(8h_1^2 + 4h_1\cdot h_2)}{mm\_flops}\\
     &\text{ }+ \frac{4\textcolor{blue}{cg}\cdot \textcolor{blue}{bls}\cdot (s+n/2)\cdot h_1}{bmm\_flops}
\end{align*}
\begin{align*}
cpu\_comp^g &= \frac{att^g}{cpu\_flops}\\
&= \frac{4(\textcolor{blue}{cc}+\textcolor{blue}{cd})\textcolor{blue}{bls}\cdot (s+n/2)\cdot h_1}{cpu\_flops}\\
\end{align*}

\paragraph{Peak Memory Constraints}

\begin{itemize}
    \item GPU peak memory constraints during prefill:
    
    GPU memory used to hold a fixed percentage of weights, activations, and cache is
    \begin{align*}
    gpu\_home^p = &\text{ } \textcolor{blue}{wg}\cdot (8h_1^2 + 4h_1\cdot h_2) \cdot l \\
    &\text{ }+ \textcolor{blue}{hg}\cdot 2s\cdot h_1 \cdot \textcolor{blue}{bls} \\
    &\text{ }+ 4(s+n)h_1\cdot \textcolor{blue}{cg}\cdot\textcolor{blue}{bls}\cdot l.
    \end{align*}
    
    GPU working memory (omit mask):
    \begin{align*}
    qkv^p =&\text{ }\textcolor{blue}{gbs} \cdot(2s\cdot h_1 + 3(2s\cdot h_1)) \\
    =&\text{ } \textcolor{blue}{gbs} \cdot 8s\cdot h_1\\
    att_1^p =&\text{ } \textcolor{blue}{cg}\cdot\textcolor{blue}{gbs} \cdot (2s\cdot h_1 + 2s\cdot h_1 + 2nh\cdot s^2) \\
    att_2^p =&\text{ }\textcolor{blue}{cg}\cdot\textcolor{blue}{gbs}\cdot( 2nh\cdot s^2+ 2s\cdot h_1 + 2s\cdot h_1) \\
    embed^p =&\text{ }\textcolor{blue}{gbs} \cdot(2s\cdot h_1 + 2s\cdot h_1)\\
    =&\text{ } \textcolor{blue}{gbs} \cdot 4s\cdot h_1 \\
    mlp_1^p =&\text{ } \textcolor{blue}{gbs} \cdot 2(s\cdot h_1 + s\cdot h_2) \\
    =&\text{ } 2\textcolor{blue}{gbs} \cdot s (h_1+h_2) \\
    mlp_2^p =&\text{ } \textcolor{blue}{gbs} \cdot 2(s\cdot h_2 + s\cdot h_1) \\
    =&\text{ } 2\textcolor{blue}{gbs} \cdot s (h_1 + h_2)
    \end{align*}
    \begin{align*}    
    gpu\_w^p&=\text{ } 2(1- \textcolor{blue}{wg})(8h_1^2+4h_1\cdot h_2) \\
    + &(1-\textcolor{blue}{hg})\cdot 2s\cdot h_1\cdot \textcolor{blue}{gbs} \\
       + &\max(
        qkv,
        att_1,
        att_2,
        embed,
        mlp_1,
        mlp_2)
    \end{align*}
    \[ gpu\_peak^p = gpu\_home^p + gpu\_w^p < gmem \]
    
    \item GPU peak memory constraints after prefill:
        
    GPU memory used to hold a fixed percentage of weights, activations, and cache is
    \begin{align*}
    gpu\_home^g = &\text{ } \textcolor{blue}{wg}\cdot (8h_1^2 + 4h_1\cdot h_2) \cdot l \\
    &\text{ }+ \textcolor{blue}{hg}\cdot 2 h_1 \cdot \textcolor{blue}{bls} \\
    &\text{ }+ 4(s+n)h_1\cdot \textcolor{blue}{cg}\cdot\textcolor{blue}{bls} \cdot l.
    \end{align*}
    
    GPU working memory (omit mask):
    \begin{align*}
    qkv^g =&\text{ }\textcolor{blue}{gbs} \cdot(2h_1 + 3(2h_1)) = 8\textcolor{blue}{gbs} \cdot h_1\\
    att_1^g =&\text{ } \textcolor{blue}{cg}\cdot\textcolor{blue}{gbs} \cdot (2h_1 +2(s+n) h_1 \\
    &\text{\quad\quad\quad\quad} + 2nh(s+n)) \\
    att_2^g =&\text{ }\textcolor{blue}{cg}\cdot\textcolor{blue}{gbs}\cdot( 2nh (s+n)+ 2(s+n)h_1 \\
    &\text{\quad\quad\quad\quad}+ 2h_1) \\
    embed^g =&\text{ }\textcolor{blue}{gbs} \cdot(2h_1 + 2h_1) = 4\textcolor{blue}{gbs} \cdot h_1 \\
    mlp_1^g =&\text{ } 2\textcolor{blue}{gbs} \cdot (h_1 + h_2) \\
    mlp_2^g =&\text{ } 2\textcolor{blue}{gbs} \cdot(h_2 + h_1)
    \end{align*}
    \begin{align*} 
    gpu\_w^g& =\text{ } 2(1- \textcolor{blue}{wg})(8h_1^2+4h_1\cdot h_2) \\
    +&(1-\textcolor{blue}{hg})\cdot 2s\cdot h_1\cdot \textcolor{blue}{gbs} \\
    +&\max(
    qkv^g,
    att_1^g,
    att_2^g,
    embed^g,
    mlp_1^g,
    mlp_2^g)
    \end{align*}
    \[ gpu\_peak^g = gpu\_home^g + gpu\_w^g < gmem \]
    
    \item CPU peak memory constraints during prefill:

    CPU memory used to hold a fixed percentage of weights, activations, and cache is
    \begin{align*}
    cpu\_home^p = &\text{ } \textcolor{blue}{wc}\cdot (8h_1^2 + 4h_1\cdot h_2) \cdot l \\
    + &\text{ } \textcolor{blue}{hc}\cdot 2s\cdot h_1 \cdot \textcolor{blue}{bls} \\
    + &\text{ } 4(s+n)h_1\cdot \textcolor{blue}{cc}\cdot \textcolor{blue}{bls} \cdot l.
    \end{align*}

    CPU working memory:
    \begin{align*}
    cpu\_w^p =&\text{ } (1-\textcolor{blue}{wg})(8h_1^2 + 4h_1\cdot h_2) \\
        &\text{ }+ (1-\textcolor{blue}{hg})\cdot 2s\cdot h_1\cdot \textcolor{blue}{gbs}.
    \end{align*}
    \[ cpu\_peak^p = cpu\_home^p + cpu\_w^p < cmem \]
    
    \item CPU peak memory constraints after prefill:
    
    CPU memory used to hold fixed percentage of weights, activations, and cache is
    \begin{align*}
    cpu\_home^g = &\text{ } \textcolor{blue}{wc}\cdot (8h_1^2 + 4h_1\cdot h_2) \cdot l \\
    &+ \textcolor{blue}{hc}\cdot 2h_1 \cdot \textcolor{blue}{bls} \\
    &+ 4(s+n)h_1\cdot \textcolor{blue}{cc}\cdot \textcolor{blue}{bls}\cdot l.
    \end{align*}

    CPU working memory:
    \begin{align*}
    cpu\_w^g =&\text{ } \textcolor{blue}{wd}(8h_1^2 + 4h_1\cdot h_2) \\
        &\text{ }+ 2\textcolor{blue}{hd}\cdot 2\cdot h_1\cdot \textcolor{blue}{gbs}\\
        &\text{ }+ 2\textcolor{blue}{cd}\cdot 4(s+n)h_1\cdot \textcolor{blue}{gbs}\\
        &\text{ }+ 2nh\cdot(s+n)\cdot \textcolor{blue}{gbs}\\
        &\text{ }+ 2h_1\cdot\textcolor{blue}{gbs}.
    \end{align*}
    \[ cpu\_peak^g = cpu\_home^g + cpu\_w^g < cmem \]
    
    \item NVMe peak memory constraints:
    \begin{align*}
    nvme\_peak = &\text{ } (8h_1^2 + 4h_1\cdot h_2) \cdot \textcolor{blue}{wd} \cdot l \\
    &\text{ }+ \textcolor{blue}{hd}\cdot 2s\cdot h_1\cdot \textcolor{blue}{bls} \\
    &\text{ }+ \textcolor{blue}{cd}\cdot 4(s+n)h_1\cdot \textcolor{blue}{bls} \cdot l \\
    <&\text{ } nmem
    \end{align*}
\end{itemize}

\subsection{Tables and Additional Experimental Results}
\label{subsec:more-exp}

\textbf{Execution Breakdown}
\cref{table:breakdown} shows the execution time breakdown for OPT-175B running on \sys with the setup in \cref{table:hardware_specs}.

\textbf{HELM and Data Wrangling}
\cref{table:helm_integration} lists the details of HELM integration experiments.
\cref{table:data_wrangling_30b} and \cref{table:data_wrangling_175b} shows additional results for the data wrangling task.

\textbf{Complementary Tables for Policy Details}
\cref{table:e2e_throughput_1_gpu_setup_512} and \cref{table:e2e_throughput_1_gpu_setup_1024} list the concrete policy setups for the results in \cref{table:e2e_throughput_1_gpu} for prompt length 512 and 1024, from end-to-end throughput experiments. \cref{table:latency_throughput_175} and \cref{table:latency_throughput_30} list the latency and throughput for the data points in \cref{fig:throughput_vs_latency} which demonstrate latency-throughput tradeoff.

\textbf{Abalation Study}
\Cref{table:main_ablation_policy} list the concrete policy setups for the main ablation study result in \Cref{table:tech_ablation}.
\cref{table:tech_ablation_throughput} and \cref{table:tech_ablation_latency} shows some additional ablation study on policies.
In \Cref{table:main_ablation_policy}, DeepSpeed chooses to store the KV cache and activations on GPU. For OPT-30B, the weights will be stored on the CPU entirely because it cannot fit in GPU. The corresponding percentage is $(0, 100, 100, 0, 100, 0)$. The computation order of DeepSpeed is row-by-row, so the number of GPU batches in a block is 1. The GPU batch size is set to be as large as possible, which is set to 8.
For OPT-175B, the weights will be stored on disk entirely according to DeepSpeed’s strategy, since it cannot be stored on CPU. The corresponding percentage is $(0, 0, 100, 0, 100, 0)$. The number of GPU batches in a block is 1, and the GPU batch size is 2.
For ``No policy search'', we use different policy changes for OPT-30B and OPT-175B to demonstrate the impact of different policy dimensions.
For OPT-30B, we change the percentage for weights from (20, 80) to (0, 100), and show that the throughput does not change much. For OPT-175B, we change the number of GPU batches in a block from 8 to 1 and show that the throughput degrades significantly.
For ``No CPU compute'', it degrades OPT-30B more than OPT-175B because the bottleneck for OPT-175B is on disk offloading. Therefore, the gain for CPU computation is small for OPT-175B. While for OPT-30B, the disk has not been used, so the gain for CPU computation is more significant.

\textbf{Different SSD Speed}
To highlight the limitation and requirements of SSD speed. We tested two kinds of disk on GCP and report the generation throughput (token/s) in \Cref{table:ssd_throughput} (input sequence length = 512 and output sequence length = 32).

\textbf{Additional Hardware and Sequence Length}
Our methods and implementations do not depend on specific hardware architectures. It can work well on different CPU architectures (e.g., Intel, AMD) and different GPU architectures (e.g., NVIDIA Ampere, NVIDIA Turing) as long as the architectures are supported by PyTorch. Some architecture (e.g. unified memory) could be more friendly to our approach.
To tune the system for different architectures, we need to fit a cost model and run policy search to generate offloading policies, which can be different according to the compute capabilities, memory capacities, and memory bandwidth of different architectures. The final absolute performance will vary, but FlexGen can be easily adapted to different architectures.
We did additional experiments on a different hardware setup of 24GB RTX 3090 with 125GB CPU Memory and 1TB SSD, in addition to our previous setting of 16GB T4 with 208GB CPU Memory and 1.5TB SSD, shown in \Cref{table:3090}. The input sequence length is set to 512 and the output sequence length is set to 32.
We can see the results follow similar trends to the setup in the main paper. FlexGen outperforms other baselines significantly. Comparing this 3090 setting with the T4 setting in the main paper, the performance under the 3090 setting is worse than the T4 setting for 30B and 175B. This is because CPU memory also plays a critical role when offloading is needed, making our T4 setting with larger CPU memory better.

\cref{table:e2e_throughput_1_gpu_setup_256} and \cref{table:more_seq_len_1_gpu} show the results for an additional prompt length 256.
As all of our benchmarks in the main paper are done with output sequence length 32, so we add two additional fixed sequence lengths in \Cref{table:e2e_throughput_1_gpu_setup_128_128} and \Cref{table:e2e_throughput_1_gpu_setup_512_8}.
The numbers are generally higher in the former one because the input sequence length is smaller and the output sequence length is larger. As the throughput is defined as (number of generated tokens) / (prefill time + generation time), such a setting makes the fraction of prefill time smaller.
The numbers are generally lower in the latter one because the output sequence length is smaller.

In summary, FlexGen outperforms baselines in all newly added settings. The Compression techniques used in FlexGen are helpful only for large models that need offloading. CPU memory capacity is essential for large models that need offloading.

\textbf{Batches with Various Sequence Length}
We also add experiments of one realistic use case with a mixture of prompt and output lengths (HELM benchmark) in \Cref{table:helm_mixed_length}.
To batch sequences of variable lengths, \sys simply pads all inputs to the maximum prompt length, which is a common method used in many systems. Depending on the distribution of the prompt length, the efficiency of this simple padding method varies. For example, if most sequences have similar lengths, then the baching efficiency should be very high. if some sequences are very long and some sequences are short, then \sys will spend a lot of time on the useless computation of padding tokens.
We use two metrics: padded throughput = (number of tokens in padded prompts + number of tokens in padded outputs) / latency and actual throughput = (number of non-padding tokens in prompts + number of non-padding tokens in outputs) / latency.
To better handle prompts with various lengths, one can utilize some complementary techniques from Orca\cite{yu2022orca}.

\begin{table}[t]
\centering
\vspace{-0.5em}
\caption{Execution time breakdown (seconds) for OPT-175B. The prompt length is 512. (R) denotes read and (W) denotes write.}
\label{table:breakdown}
\footnotesize
\resizebox{\columnwidth}{!}{
\begin{tabular}{lllllll}
\toprule
Stage     & Total & Compute & Weight (R) & Cache (R) & Cache (W) \\
\midrule
Prefill   & 2711  & 2220    & 768        & 0         & 261       \\
Decoding  & 11315 & 1498    & 3047       & 7046      & 124       \\
\bottomrule
\end{tabular}
}
\vspace{-1.5em}
\end{table}

\begin{table*}[t]
\centering
\caption{The setup and running time of 7 representative sub-scenarios in the HELM integration. The running time consists of dataset downloading, model initialization, generation, and metric computation.
``Prompt len'' denotes the input sequence length, and ``Gen len'' denotes the output sequence length. ``Num seq'' denotes the number of sequences (prompts). ``time'' denotes the running time in minutes.
}
\label{table:helm_integration}
\begin{tabular}{lrrrr}
\toprule
Scenario description & Prompt len & Gen len & Num seq &  time  \\
\midrule
wikifact: k=5, subject=plaintiff & 256 & 8 & 288 & 10 \\
wikifact: k=5, subject=instance\_of & 256 & 8 & 2592 & 55 \\
mmlu: subject=abstract\_algebra & 512 & 1 & 864 & 31 \\
mmlu: subject=us\_foreign\_policy & 512 & 1 & 1008 & 33 \\
synthetic\_reasoning: mode=pattern\_match & 256 & 50 & 1584 & 118 \\
synthetic\_reasoning\_natural: difficulty=easy & 512 & 20 & 1584 & 100 \\
summarization\_xsum: temperature=0.3 & 1984 & 64 & 1568 & 902 \\
\bottomrule
\end{tabular}
\end{table*}


\begin{table*}[t]
\centering
\vspace{-0.5em}
\caption{The setup and running time of 6 representative data wrangling tasks with OPT-30B. Because the output seq. length is short for this task, we use a new metric total throughput = (number of tokens in the prompt + number of generated tokens) / total latency.}
\label{table:data_wrangling_30b}
\footnotesize
\begin{tabular}{rrrrrr}
\toprule
Task & Number of seq. & Input seq. length & Output seq. length & Running time (s) & Total throughput (token/s) \\
\midrule
EM: Fodors-Zagats      & 189            & 744           & 3             &541.550  & 248.287               \\
EM: Beer               & 91             & 592           & 3             &238.58   & 224.450               \\
EM: iTunes-Amazon      & 109            & 529           & 3             &267.639  & 198.775               \\
DI: Restaurant         & 86             & 123           & 5             &60.310   & 169.790               \\
DI: Buy                & 65             & 488           & 10            &185.882  & 160.747               \\
ED: Hospital           & 200            & 200           & 3             &158.329  & 256.429               \\
\bottomrule
\end{tabular}
\vspace{-0.5em}
\end{table*}

\begin{table*}[t]
\centering
\vspace{-0.5em}
\caption{The setup and running time of 6 representative data wrangling tasks with OPT-175B. Because the output seq. length is short for this task, we use a new metric total throughput = (number of tokens in the prompt + number of generated tokens) / total latency.}
\label{table:data_wrangling_175b}
\footnotesize
\begin{tabular}{rrrrrr}
\toprule
Task & Number of seq. & Input seq. length & Output seq. length & Running time (s) & Total throughput (token/s) \\
\midrule
EM: Fodors-Zagats      & 189            & 744           & 3             &3928.310  & 34.228               \\
EM: Beer               & 91             & 592           & 3             &1356.786  & 35.083               \\
EM: iTunes-Amazon      & 109            & 529           & 3             &1569.062  & 33.906               \\
DI: Restaurant         & 86             & 123           & 5             &648.762   & 16.968               \\
DI: Buy                & 65             & 488           & 10            &2086.961  & 14.317               \\
ED: Hospital           & 200            & 200           & 3             &1154.133  & 35.178               \\
\bottomrule
\end{tabular}
\vspace{-0.5em}
\end{table*}

\begin{table*}[t]
\centering
\caption{Generation throughput (token/s) on 1 GPU (RTX 3090) with 125 GB CPU memory and 1TB SSD, run with \textbf{input sequence length 512 and output sequence length 32}. \sys is our system without compression; \sys (c) uses 4-bit compression.
The gray tuple denotes a policy (GPU batch size $\times$ \#GPU-batch, $wg$, $wc$, $cg$, $cc$, $hg$, $hc$).}
\label{table:3090}
\footnotesize
\begin{tabular}{llll}
\toprule
Seq. length & \multicolumn{3}{c}{512 + 32} \\
\cmidrule(lr){2-4}
Model size  & 6.7B   & 30B   & 175B  \\
 \midrule
Accelerate  & 183.177 \color{gray}{(16$\times$1, 100, 0, 100, 0, 100, 0)} & 2.077 \color{gray}{(13$\times$1, 0, 100, 100, 0, 100, 0)} & 0.026 \color{gray}{(4$\times$1, 0, 0, 100, 0, 100, 0)} \\
DeepSpeed   & 38.027 \color{gray}{(32$\times$1, 0, 100, 100, 0, 100, 0)} & 3.889 \color{gray}{(12$\times$1, 0, 100, 100, 0, 100, 0)} & 0.019 \color{gray}{(3$\times$1, 0, 0, 100, 0, 100, 0)} \\
\sys        & 233.756 \color{gray}{(28$\times$1, 100, 0, 100, 0, 100, 0)} & 5.726 \color{gray}{(4$\times$15, 25, 75, 40, 60, 100, 0)} & 0.384 \color{gray}{(64$\times$4, 0, 25, 0, 0, 100, 0)}\\
\midrule
\sys (c)    & 120.178 \color{gray}{(144$\times$1, 100, 0, 100, 0, 100, 0)} & 16.547 \color{gray}{(96$\times$2, 25, 75, 0, 100, 100, 0)} & 1.114 \color{gray}{(24$\times$1, 0, 100, 0, 100, 100, 0)} \\
\bottomrule
\end{tabular}
\end{table*}

\iftrue
\begin{table*}[t]
\centering
\vspace{-0.5em}
\caption{Generation throughput (token/s) on 1 GPU with different systems. Accelerate, DeepSpeed, and \sys use 1 GPU. Petals uses 1 GPU for OPT-6.7B, 4 GPUs for OPT-30B, and 24 GPUs for OPT-175B, but reports per-GPU throughput. Petals is benchmarked under different network delay and bandwidth. The models are run in INT8 as the default for Petals. We tune the batch size of each request to be 2 and issue requests by 6 parallel client processes to achieve the maximum throughput.
\sys is our system without compression; \sys (c) uses 4-bit compression. ``OOM'' means out-of-memory.}
\label{table:more_seq_len_1_gpu}
\begin{tabular}{llllllllll}
\toprule
Seq. length & \multicolumn{3}{c}{256} & \multicolumn{3}{c}{512} & \multicolumn{3}{c}{1024} \\
\cmidrule(lr){2-4}\cmidrule(lr){5-7}\cmidrule(lr){8-10}
Model size  & 6.7B   & 30B   & 175B  & 6.7B   & 30B   & 175B   & 6.7B     & 30B     & 175B     \\
 \midrule
Accelerate  & 50.66 & 1.34 & 0.02 & 25.12  & 0.62  & 0.01   & 13.01    & 0.31    & 0.01     \\
DeepSpeed   & 14.52 & 1.30 & 0.01 & 9.28   & 0.60  & 0.01   & 4.59     & 0.29    & OOM      \\
\midrule Petals ($<$5ms, 1Gb/s) & 9.03 & 3.55 & 0.09 & 8.25 & 2.84 & 0.08 & 6.56 & 1.51 & 0.06 \\
Petals ($<$5ms, 100Mb/s) & 9.15 & 2.53 & 0.06 & 8.18 & 1.67 & 0.05 & 6.52 & 0.87 & 0.03 \\
Petals (100ms, 100Mb/s) & 8.64 & 0.75 & 0.01 & 7.82 & 0.64 & 0.01 & 5.89 & 0.37 & 0.01 \\ \midrule
\sys        & 53.29 & 16.01 & 1.36 & 25.26  & 7.32  & 0.69   & 13.72    & 3.50    & 0.35     \\
\midrule
\sys (c)    & 56.72 & 16.86 & 2.26 & 29.12  & 8.70  & 1.12   & 13.18    & 3.98    & 0.42     \\
\bottomrule
\end{tabular}
\end{table*}
\fi

\begin{table*}[t]
\centering
\vspace{-0.5em}
\caption{Generation throughput (token/s) on 1 GPU with \textbf{input sequence length 256 and output sequence length 32}. \sys is our system without compression; \sys (c) uses 4-bit compression. ``OOM'' means out-of-memory. The gray
tuple denotes a policy (GPU batch size $\times$ \#GPU-batch, $wg$, $wc$, $cg$, $cc$, $hg$, $hc$).}
\label{table:e2e_throughput_1_gpu_setup_256}
\footnotesize
\begin{tabular}{llll}
\toprule
Seq. length & \multicolumn{3}{c}{256} \\
\cmidrule(lr){2-4}
Model size  & 6.7B   & 30B   & 175B  \\
 \midrule
Accelerate  & 50.66 \color{gray}{(4$\times$1, 100, 0, 100, 0, 100, 0)}  & 1.34 \color{gray}{(16$\times$1, 0, 100, 100, 0, 100, 0)}  & 0.02 \color{gray}{(4$\times$1, 0, 0, 100, 0, 100, 0)}  \\
DeepSpeed   & 14.52 \color{gray}{(32$\times$1, 0, 100, 100, 0, 100, 0)} & 1.30 \color{gray}{(12$\times$1, 0, 100, 100, 0, 100, 0)}  &  0.01 \color{gray}{(2$\times$1, 0, 0, 100, 0, 100, 0)}  \\
\sys        & 53.29 \color{gray}{(4$\times$1, 100, 0, 100, 0, 100, 0)} & 16.01 \color{gray}{(160$\times$2, 10, 90, 0, 100, 0, 100)}  & 1.36 \color{gray}{(64$\times$8, 0, 50, 0, 0, 0, 100)}  \\
\midrule
\sys (c)    & 56.72 \color{gray}{(128$\times$1, 100, 0, 100, 0, 100, 0)} & 16.86 \color{gray}{(128$\times$8, 0, 100, 0, 100, 0, 100)}  & 2.26 \color{gray}{(96$\times$3, 0, 100, 0, 100, 0, 100)}  \\
\bottomrule
\end{tabular}
\end{table*}

\begin{table*}[t]
\centering
\caption{Generation throughput (token/s) on 1 T4 GPU with \textbf{input sequence length 512 and output sequence length 32}. \sys is our system without compression; \sys (c) uses 4-bit compression. ``OOM'' means out-of-memory. The gray
tuple denotes a policy (GPU batch size $\times$ \#GPU-batch, $wg$, $wc$, $cg$, $cc$, $hg$, $hc$).}
\label{table:e2e_throughput_1_gpu_setup_512}
\footnotesize
\begin{tabular}{llll}
\toprule
Seq. length & \multicolumn{3}{c}{512} \\
\cmidrule(lr){2-4}
Model size  & 6.7B   & 30B   & 175B  \\
 \midrule
Accelerate  & 25.12 \color{gray}{(2$\times$1, 100, 0, 100, 0, 100, 0)}  & 0.62 \color{gray}{(8$\times$1, 0, 100, 100, 0, 100, 0)}  & 0.01 \color{gray}{(2$\times$1, 0, 0, 100, 0, 100, 0)}  \\
DeepSpeed   & 9.28 \color{gray}{(16$\times$1, 0, 100, 100, 0, 100, 0)}   & 0.60 \color{gray}{(4$\times$1, 0, 100, 100, 0, 100, 0)}  & 0.01 \color{gray}{(1$\times$1, 0, 0, 100, 0, 100, 0)}  \\
\sys        & 25.26 \color{gray}{(2$\times$1, 100, 0, 100, 0, 100, 0)}  & 7.32 \color{gray}{(48$\times$3, 20, 80, 0, 100, 0, 100)}  & 0.69 \color{gray}{(32$\times$8, 0, 50, 0, 0, 0, 100)}  \\
\midrule
\sys (c)    & 29.12 \color{gray}{(72$\times$1, 100, 0, 100, 0, 100, 0)}  & 8.70 \color{gray}{(16$\times$20, 20, 80, 0, 100, 100, 0)}  & 1.12 \color{gray}{(48$\times$3, 0, 100, 0, 100, 0, 100)}  \\
\bottomrule
\end{tabular}
\end{table*}

\begin{table*}[t]
\centering
\vspace{-0.5em}
\caption{Generation throughput (token/s) on 1 T4 GPU with \textbf{input sequence length 1024 and output sequence length 32}. \sys is our system without compression; \sys (c) uses 4-bit compression. ``OOM'' means out-of-memory. The gray
tuple denotes a policy (GPU batch size $\times$ \#GPU-batch, $wg$, $wc$, $cg$, $cc$, $hg$, $hc$).}
\label{table:e2e_throughput_1_gpu_setup_1024}
\footnotesize
\begin{tabular}{lllllll}
\toprule
Seq. length & \multicolumn{3}{c}{1024} \\
\cmidrule(lr){2-4}\cmidrule(lr){5-7}
Model size  & 6.7B     & 30B     & 175B     \\
 \midrule
Accelerate  & 13.01 \color{gray}{(1$\times$1, 100, 0, 100, 0, 100, 0)}   & 0.31 \color{gray}{(4$\times$1, 0, 100, 100, 0, 100, 0)}    & 0.01 \color{gray}{(1$\times$1, 0, 0, 100, 0, 100, 0)}    \\
DeepSpeed   & 4.59 \color{gray}{(8$\times$1, 0, 100, 100, 0, 100, 0)}    & 0.29 \color{gray}{(2$\times$1, 0, 100, 100, 0, 100, 0)}    & OOM      \\
\sys        & 13.72 \color{gray}{(1$\times$1, 100, 0, 100, 0, 100, 0)}    & 3.50 \color{gray}{(20$\times$4, 4, 96, 0, 100, 0, 100)}    & 0.35 \color{gray}{(12$\times$12, 0, 50, 0, 0, 0, 100)}    \\
\midrule
\sys (c)    & 13.18 \color{gray}{(28$\times$1, 100, 0, 100, 0, 100, 0)}    & 3.98 \color{gray}{(20$\times$12, 0, 100, 0, 100, 0, 100)}   & 0.42 \color{gray}{(12$\times$4, 0, 100, 0, 100, 0, 100)}    \\
\bottomrule
\end{tabular}

\end{table*}

\begin{table*}[t]
\centering
\caption{Generation throughput (token/s) on 1 T4 GPU with \textbf{input sequence length 128 and output sequence length 128}. \sys is our system without compression; \sys (c) uses 4-bit compression. ``OOM'' means out-of-memory. The gray
tuple denotes a policy (GPU batch size $\times$ \#GPU-batch, $wg$, $wc$, $cg$, $cc$, $hg$, $hc$).}
\label{table:e2e_throughput_1_gpu_setup_128_128}
\footnotesize
\begin{tabular}{llll}
\toprule
Seq. length & \multicolumn{3}{c}{128 + 128} \\
\cmidrule(lr){2-4}
Model size  & 6.7B   & 30B   & 175B  \\
 \midrule
Accelerate  & 73.411 \color{gray}{(5$\times$1, 100, 0, 100, 0, 100, 0)} & 1.547 \color{gray}{(16$\times$1, 0, 100, 100, 0, 100, 0)} & 0.021 \color{gray}{(4$\times$1, 0, 0, 100, 0, 100, 0)}\\
DeepSpeed   & 19.193 \color{gray}{(36$\times$1, 0, 100, 100, 0, 100, 0)} & 1.717 \color{gray}{(12$\times$1, 0, 100, 100, 0, 100, 0)} & 0.024 \color{gray}{(3$\times$1, 0, 0, 100, 0, 100, 0)} \\
\sys        & 106.404 \color{gray}{(7$\times$1, 100, 0, 100, 0, 100, 0)} & 24.634 \color{gray}{(32$\times$10, 25, 75, 0, 100, 100, 0)} & 2.409 \color{gray}{(64$\times$8, 0, 50, 0, 0, 0, 100)} \\
\midrule
\sys (c)    & 92.568 \color{gray}{(196$\times$1, 100, 0, 100, 0, 100, 0)} & 39.141 \color{gray}{(128$\times$8, 25, 75, 0, 100, 0, 100)} & 4.264 \color{gray}{(80$\times$3, 0, 100, 0, 100, 100, 0)}\\
\bottomrule
\end{tabular}
\end{table*}
\begin{table*}[t]
\centering
\caption{Generation throughput (token/s) on 1 T4 GPU with \textbf{input sequence length 512 and output sequence length 8}. \sys is our system without compression; \sys (c) uses 4-bit compression. ``OOM'' means out-of-memory. The gray
tuple denotes a policy (GPU batch size $\times$ \#GPU-batch, $wg$, $wc$, $cg$, $cc$, $hg$, $hc$).}
\label{table:e2e_throughput_1_gpu_setup_512_8}
\footnotesize
\begin{tabular}{llll}
\toprule
Seq. length & \multicolumn{3}{c}{512 + 8} \\
\cmidrule(lr){2-4}
Model size  & 6.7B   & 30B   & 175B  \\
 \midrule
Accelerate  & 17.290 \color{gray}{(2$\times$1, 100, 0, 100, 0, 100, 0)} & 0.628 \color{gray}{(7$\times$1, 0, 100, 100, 0, 100, 0)} & 0.009 \color{gray}{(2$\times$1, 0, 0, 100, 0, 100, 0)}\\
DeepSpeed   & 9.055 \color{gray}{(18$\times$1, 0, 100, 100, 0, 100, 0)} & 0.872 \color{gray}{(6$\times$1, 0, 100, 100, 0, 100, 0)} & 0.007 \color{gray}{(1$\times$1, 0, 0, 100, 0, 100, 0)} \\
\sys        & 16.425 \color{gray}{(2$\times$1, 100, 0, 100, 0, 100, 0)} & 3.938 \color{gray}{(512$\times$8, 20, 80, 0, 100, 0, 100)} & 0.451 \color{gray}{(32$\times$8, 0, 50, 0, 0, 0, 100)} \\
\midrule
\sys (c)    & 14.244 \color{gray}{(76$\times$1, 100, 0, 100, 0, 100, 0)} & 4.019 \color{gray}{(16$\times$36, 25, 75, 0, 100, 0, 100)} & 0.559 \color{gray}{(48$\times$3, 0, 100, 0, 100, 0, 100)} \\
\bottomrule
\end{tabular}
\end{table*}

\begin{table*}[t]
\centering
\vspace{-0.5em}
\caption{The Pareto frontier of the latency-throughput trade-off of OPT-175B. The numbers are generation throughput (token/s) and effective batch latency (s) on 1 GPU with \textbf{input sequence length 512 and output sequence length 32}. The numbers in the parentheses are corresponding effective batch sizes. The numbers in bold are the best throughput and latency for each model. We organize the table so that the latency numbers of different methods in each row are similar for each model. The top value of each column corresponds to the setting of effective batch size 1.
(To reach the lowest latency, \sys uses an effective batch size of 2 rather than 1 because the latency difference between batch sizes 1 and 2 is 
 negligible in this case. So, a run with batch size 2 dominates the one with batch size 1 with higher throughput and similar latency.)}
\label{table:latency_throughput_175}
\begin{tabular}{rrrr}
\toprule
\multicolumn{4}{c}{175B (generation throughput / latency)}\\
\cmidrule(rrrr){1-4}
Accelerate & DeepSpeed & FlexGen & FlexGen (c) \\
\midrule
-  &  - & - & 0.052 / \textbf{612} (1)\\
-  &  - & - & 0.198 / 647 (4)\\
-  &  - & - & 0.369 / 693 (8)\\
-  &  - & - & 0.779 / 1973 (48)\\
-  &  - & 0.025 / 2555 (2) & 1.092 / 2813 (96)\\
-  &  - & 0.254 / 4028 (32) & \textbf{1.122} / 4072 (144)\\
-  & 0.006 / 5024 (1) & 0.421 / 4864 (64) & - \\
-  &  - & 0.572 / 7159 (128) & - \\ 
 0.004 / 7508 (1) & - & - & - \\
0.008 / 7633 (2) & - & - & - \\
- &  - & 0.687 / 11916 (256) & - \\
\bottomrule
\end{tabular}
\vspace{-0.5em}
\end{table*}

\begin{table*}[t]
\centering
\vspace{-0.5em}
\caption{The Pareto frontier of the latency-throughput trade-off of OPT-30B. The numbers are generation throughput (token/s) and effective batch latency (s) on 1 GPU with \textbf{input sequence length 512 and output sequence length 32}. The numbers in the parentheses are corresponding effective batch sizes. The numbers in bold are the best throughput and latency for each model. We organize the table so that the latency numbers of different methods in each row are similar for each model. The top value of each column corresponds to the setting of effective batch size 1.
}
\label{table:latency_throughput_30}
\begin{tabular}{rrrr}
\toprule
\multicolumn{4}{c}{30B (generation throughput / latency)}\\
\cmidrule(rrrr){1-4}
Accelerate & DeepSpeed & FlexGen & FlexGen (c) \\
\midrule
 - & - & - & 0.21 / \textbf{153} (1) \\
 - & - & - & 0.42 / 154 (2) \\
 - & - & 0.20 / 159 (1) & 0.82 / 155 (4) \\
 - & - & 0.37 / 172 (2) & 1.58 / 162 (8) \\
 - & - & 0.73 / 174 (4) & 2.88 / 178 (16) \\
 - & 0.16 / 203 (1) & 1.40 / 183 (8) & - \\
 - & 0.31 / 204 (2) & 2.70 / 190 (16) & - \\
 - & 0.62 / 206 (4) & 4.05 / 253 (32) & 4.63 / 277 (40) \\ 
 0.08 / 405 (1) & - & 5.71 / 359 (64) & 6.72 / 381 (80) \\
 0.31 / 408 (4) & - & - & - \\
 0.62 / 413 (8) & - & - & - \\
 - & - & 7.32 / 559 (144) & - \\
 - & - & - & 7.96 / 644 (160) \\
 - & - & - & 8.49 / 904 (240) \\
 - & - & - & \textbf{8.70} / 1177 (320) \\
\bottomrule
\end{tabular}
\vspace{-0.5em}
\end{table*}

\begin{table*}[t]
\centering
\caption{Ablation study of policies. The numbers correspond to generation \textbf{throughput} on 1 GPU with \textbf{input sequence length 512 and output sequence length 32}.
All policies have CPU computation turned on.
The numbers for OPT-175B show some inconsistency with the end-to-end evaluation in \cref{table:e2e_throughput_1_gpu} and \cref{table:e2e_throughput_1_gpu_setup_512} (0.49 vs 0.69) because we turn on the pagecache-mangagement~\cite{pagecache-mangagement} tool to prevent the automatic disk cache in operating systems, which makes the ablation results more accurate but brings some overheads. This added some overhead and misses the advantage of using CPU cache. A real run should be expected to have a better throughput.
($gbs$ denotes the GPU batch size, $\#gb$ denotes the number of GPU batches in a block.)}
\label{table:tech_ablation_throughput}
\begin{tabular}{rrrrrrrrrr}
\toprule
$gbs$ & $\#gb$ & $wg$ & $wc$ & $cg$ & $cc$ & $hg$ & $hc$ & 30B (token/s)   & 175B (token/s) \\
\midrule
48 & 3 & 20 & 80 & 0 & 100 & 0 & 100 & \textbf{7.32} & OOM \\
48 & 3 & 0 & 100 & 0 & 100 & 0 & 100 & 7.26 & OOM \\
48 & 1 & 20 & 80 & 0 & 100 & 0 & 100 & 5.40 & OOM \\
32 & 8 & 0 & 50 & 0 & 0 & 0 & 100 & 1.66 & \textbf{0.49} \\
32 & 8 & 0 & 0 & 0 & 0 & 0 & 100 & 1.55 & 0.44 \\
32 & 1 & 0 & 50 & 0 & 0 & 0 & 100 & 0.88 & 0.23 \\
1 & 1 & 20 & 80 & 100 & 0 & 100 & 0 & 0.20 & OOM \\
1 & 1 & 0 & 50 & 100 & 0 & 100 & 0 & 0.04 & 0.01 \\
8 & 1 & 0 & 100 & 100 & 0 & 100 & 0 & 1.57 & OOM \\
2 & 1 & 0 & 0 & 100 & 0 & 100 & 0 & 0.05 & 0.01 \\
\bottomrule
\end{tabular}
\end{table*}

\begin{table*}[t]
\centering
\caption{Ablation study of policies. The numbers are full generation \textbf{latency} on 1 GPU with \textbf{input sequence length 512 and output sequence length 32}. 
All policies have CPU computation turned on.
We turn on the pagecache-mangagement~\cite{pagecache-mangagement} tool to prevent the automatic disk cache in operating systems, which makes the ablation results more accurate but brings some overheads. This added some overhead and misses the advantage of using CPU cache. A real run should be expected to have a better latency.
($gbs$ denotes the GPU batch size, $\#gb$ denotes the number of GPU batches in a block.)}
\label{table:tech_ablation_latency}
\begin{tabular}{rrrrrrrrrr}
\toprule
$gbs$ & $\#gb$ & $wg$ & $wc$ & $cg$ & $cc$ & $hg$ & $hc$ & 30B (s)   & 175B (s) \\
\midrule
48 & 3 & 20 & 80 & 0 & 100 & 0 & 100 & 559 & OOM \\
48 & 3 & 0 & 100 & 0 & 100 & 0 & 100 & 635 & OOM \\
48 & 1 & 20 & 80 & 0 & 100 & 0 & 100 & 284 & OOM \\
32 & 8 & 0 & 50 & 0 & 0 & 0 & 100 & 4930 & 16611 \\
32 & 8 & 0 & 0 & 0 & 0 & 0 & 100 & 5287 & 18704 \\
32 & 1 & 0 & 50 & 0 & 0 & 0 & 100 & 1164 & 4476 \\
1 & 1 & 20 & 80 & 100 & 0 & 100 & 0 & \textbf{160} & OOM \\
1 & 1 & 0 & 50 & 100 & 0 & 100 & 0 & 737 & \textbf{3107} \\
8 & 1 & 0 & 100 & 100 & 0 & 100 & 0 & 170 & OOM \\
2 & 1 & 0 & 0 & 100 & 0 & 100 & 0 & 1215 & 6072 \\
\bottomrule
\end{tabular}
\end{table*}

\begin{table*}[t]
\centering
\vspace{-0.5em}
\caption{Ablation study of proposed techniques. The numbers are generation throughput on 1 T4 GPU with prompt length 512 and generating length 32. The gray tuple denotes a policy (GPU batch size $\times$ \#GPU-batch, $wg$, $wc$, $cg$, $cc$, $hg$, $hc$).}
\label{table:main_ablation_policy}
\begin{tabular}{l|lll}
\toprule
Model size          & 30B     & 175B    \\
\midrule
All optimizations   & 7.32 \color{gray}{(48$\times$3, 20, 80, 0, 100, 0, 100)} & 0.69 \color{gray}{(32$\times$8, 0, 50, 0, 0, 0, 100)} \\
No policy search    & 7.26 \color{gray}{(48$\times$3, 0, 100, 0, 100, 0, 100)} &  0.27 \color{gray}{(32$\times$1, 0, 50, 0, 0, 0, 100)}  \\

No overlapping      & 5.86 \color{gray}{(48$\times$3, 20, 80, 0, 100, 0, 100)} & 0.59 \color{gray}{(32$\times$8, 0, 50, 0, 0, 0, 100)}     \\
No CPU compute      & 4.03 \color{gray}{(48$\times$3, 20, 80, 0, 100, 0, 100)}  & 0.62 \color{gray}{(32$\times$8, 0, 50, 0, 0, 0, 100)}     \\
No disk             & 7.32 \color{gray}{(48$\times$3, 20, 80, 0, 100, 0, 100)}  & OOM      \\
w/ DeepSpeed policy & 1.57 \color{gray}{(8$\times$1, 0, 100, 100, 0, 100, 0)}   & 0.01 \color{gray}{(2$\times$1, 0, 0, 100, 0, 100, 0)}  \\
\bottomrule
\end{tabular}
\vspace{-1em}
\end{table*}

\begin{table*}[t]
\centering
\caption{Generation throughput (token/s) on hardware specifed in \cref{table:hardware_specs} with \textbf{input sequence length 512 and output sequence length 32}.
The performance of OPT-30B is not affected because OPT-30B does not use SSD. The disk speed is measured using the Linux command dd with a block size (bs) of 1MB and the number of blocks (count) of 16000. The PageCacheManagement tool is used to disable disk cache in the operating system during measurement.}
\label{table:ssd_throughput}
\begin{tabular}{lrr}
\toprule
Disk Specification &30B   & 175B  \\
 \midrule
1.6GB/s read, 1.3GB/s write (local SSD, the one used in the main paper) & 7.32 & 0.69 \\
0.5GB/s read, 0.5GB/s write (persistent SSD, a new setting) & 7.32 & 0.30 \\
1.6GB/s read, 1.3GB/s write (local SSD, use PageCacheManagement) & 7.32 & 0.49 \\
0.5GB/s read, 0.5GB/s write (persistent SSD, use PageCacheManagement) & 7.32 & 0.292 \\
\bottomrule
\end{tabular}
\end{table*}

\begin{table*}[t]
\centering
\caption{Selected example of FlexGen on real-world tasks from the HELM benchmark, which consists of prompts of various lengths with different output lengths.
We use two metrics: padded throughput = (number of tokens in padded prompts + number of tokens in padded outputs) / latency, actual throughput = (number of non-padding tokens in prompts + number of non-padding tokens in outputs) / latency.
The throughput are measured in token/s.
To batch sequences of variable lengths, FlexGen simply pads all inputs to the maximum prompt length, which is a common method used in many systems. Depending on the distribution of the prompt length, the efficiency of this simple padding method varies. For example, if most sequences have similar lengths, then the batching efficiency should be very high. if some sequences are very long and some sequences are short, then FlexGen will spend a lot of time on the useless computation of padding tokens.}
\label{table:helm_mixed_length}
\footnotesize
\begin{tabular}{p{7em}rrrrr}
\toprule
Task & Padded input seq. length & Padded output seq. length & Padded throughput & Actual throughput & Efficiency \\
\midrule
MMLU\\ (abstract\_algebra) & 512 & 1 & 251.5 & 188.6 & 75.0\% \\
\midrule
xsum & 1984 & 64 & 60.5 & 47.6 & 78.7\% \\
\bottomrule
\end{tabular}
\end{table*}

\end{document}